\documentclass{article} % For LaTeX2e

\PassOptionsToPackage{dvipsnames,table}{xcolor}

\usepackage{iclr2026_conference,times}

% Optional math commands from https://github.com/goodfeli/dlbook_notation.
%%%%% NEW MATH DEFINITIONS %%%%%

\usepackage{amsmath,amsfonts,bm}

% Mark sections of captions for referring to divisions of figures

% Highlight a newly defined term

% Figure reference, lower-case.

% Figure reference, capital. For start of sentence

% Section reference, lower-case.

% Section reference, capital.

% Reference to two sections.

% Reference to three sections.

% Reference to an equation, lower-case.
\def\eqref#1{equation~\ref{#1}}
% Reference to an equation, upper case

% A raw reference to an equation---avoid using if possible

% Reference to a chapter, lower-case.

% Reference to an equation, upper case.

% Reference to a range of chapters

% Reference to an algorithm, lower-case.

% Reference to an algorithm, upper case.

% Reference to a part, lower case

% Reference to a part, upper case

\def\1{\bm{1}}

% Random variables

% rm is already a command, just don't name any random variables m

% Random vectors

% Elements of random vectors

% Random matrices

% Elements of random matrices

% Vectors

\def\vmu{{\bm{\mu}}}
\def\vtheta{{\bm{\theta}}}
\def\vphi{{\bm{\phi}}}
\def\vnu{{\bm{\nu}}}
\def\vgamma{{\bm{\gamma}}}
\def\vchi{{\bm{\chi}}}

\def\vc{{\bm{c}}}

\def\vg{{\bm{g}}}

\def\vu{{\bm{u}}}

\def\vw{{\bm{w}}}
\def\vx{{\bm{x}}}

% Elements of vectors

% Matrix

\def\mI{{\bm{I}}}

% Tensor
\DeclareMathAlphabet{\mathsfit}{\encodingdefault}{\sfdefault}{m}{sl}
\SetMathAlphabet{\mathsfit}{bold}{\encodingdefault}{\sfdefault}{bx}{n}

% Graph

% Sets

% Don't use a set called E, because this would be the same as our symbol
% for expectation.

% Entries of a matrix

% entries of a tensor
% Same font as tensor, without \bm wrapper

% The true underlying data generating distribution

% The empirical distribution defined by the training set

% The model distribution

% Stochastic autoencoder distributions

 % Laplace distribution

\newcommand{\E}{\mathbb{E}}

\newcommand{\R}{\mathbb{R}}

% Wolfram Mathworld says $L^2$ is for function spaces and $\ell^2$ is for vectors
% But then they seem to use $L^2$ for vectors throughout the site, and so does
% wikipedia.

 % See usage in notation.tex. Chosen to match Daphne's book.

\usepackage{hyperref}
\usepackage{url}

\usepackage{amssymb}
\usepackage{mathtools}
\usepackage{amsthm}
\usepackage{braket}
\usepackage{caption}
\usepackage[linesnumbered,ruled,vlined]{algorithm2e}
\usepackage{tabularray}
\UseTblrLibrary{booktabs}
\usepackage{makecell}
\usepackage{multirow}

\usepackage{enumitem}
\usepackage{xcolor}
\usepackage{float}
\captionsetup[table]{skip=0.8ex}
\captionsetup[figure]{skip=1.5ex}

\newlength\itemparskip
\setlength{\itemparskip}{0.5\parskip}
\setlist{nosep, leftmargin=1em, itemsep=0pt, parsep=0pt, before={\parskip=\itemparskip}, after=\vspace{-\itemparskip}}%

\newtheoremstyle{custom}
{\parskip} % Space above
{0}
{} % Body font
{} % Indent amount
{\bfseries} % Theorem head font
{.} % Punctuation after theorem head
{.5em} % Space after theorem head
{} % Theorem head spec (can be left empty, meaning `normal')
\theoremstyle{custom}
\newtheorem{theorem}{Theorem}

\newtheorem{lemma}{Lemma}

\theoremstyle{definition}

\theoremstyle{remark}

\let\originalleft\left
\let\originalright\right
\renewcommand{\left}{\mathopen{}\mathclose\bgroup\originalleft}
\renewcommand{\right}{\aftergroup\egroup\originalright}

\newcommand{\diff}{\mathop{}\!\mathrm{d}}
\let\originalpartial\partial
\renewcommand{\partial}{\mathop{}\!\mathrm{\originalpartial}}
\newcommand{\equivto}{\llap{$\Leftrightarrow$ \qquad}}

\makeatletter
% Remove right hand margin in algorithm
\patchcmd{\@algocf@start}% <cmd>
  {-1.5em}% <search>
  {-0.5em}% <replace>
  {}{}% <success><failure>
\makeatother

\SetCommentSty{mycommfont}

% fixed grey numbers
\colorlet{lnum}{black!60}

\SetNlSty{nlnumstyle}{}{}
\SetAlgoNlRelativeSize{-1}

\definecolor{teacher}{RGB}{78,167,46}
\definecolor{student}{RGB}{21,96,130}
\definecolor{detached}{RGB}{140,140,140}

\newcommand{\methodname}{\texorpdfstring{$\pi$}{pi}-Flow}

\newcommand{\edit}[1]{%
  \ificlrfinal
    #1%
  \else
    \textcolor{blue}{#1}%
  \fi
}

\iclrfinalcopy % Uncomment for camera-ready version, but NOT for submission.
\begin{document}

\title{pi-Flow: Policy-Based Few-Step Generation via Imitation Distillation}

\author{%
  \vspace{1ex}
  \textbf{Hansheng Chen}$^{1}$ \ \ 
  \textbf{Kai Zhang}$^{2}$ \ \ 
  \textbf{Hao Tan}$^{2}$ \ \ 
  \textbf{Leonidas Guibas}$^{1}$ \ \ 
  \textbf{Gordon Wetzstein}$^{1}$ \ \ 
  \textbf{Sai Bi}$^{2}$ \\
  \vspace{1ex}
  $^{1}$Stanford University \ \ 
  $^{2}$Adobe Research \\
  \vspace{-1ex}
  \url{https://github.com/Lakonik/piFlow}
}

\maketitle

\begin{figure}[H]
\vspace{-2ex}
\centering
\includegraphics[width=0.96\textwidth]{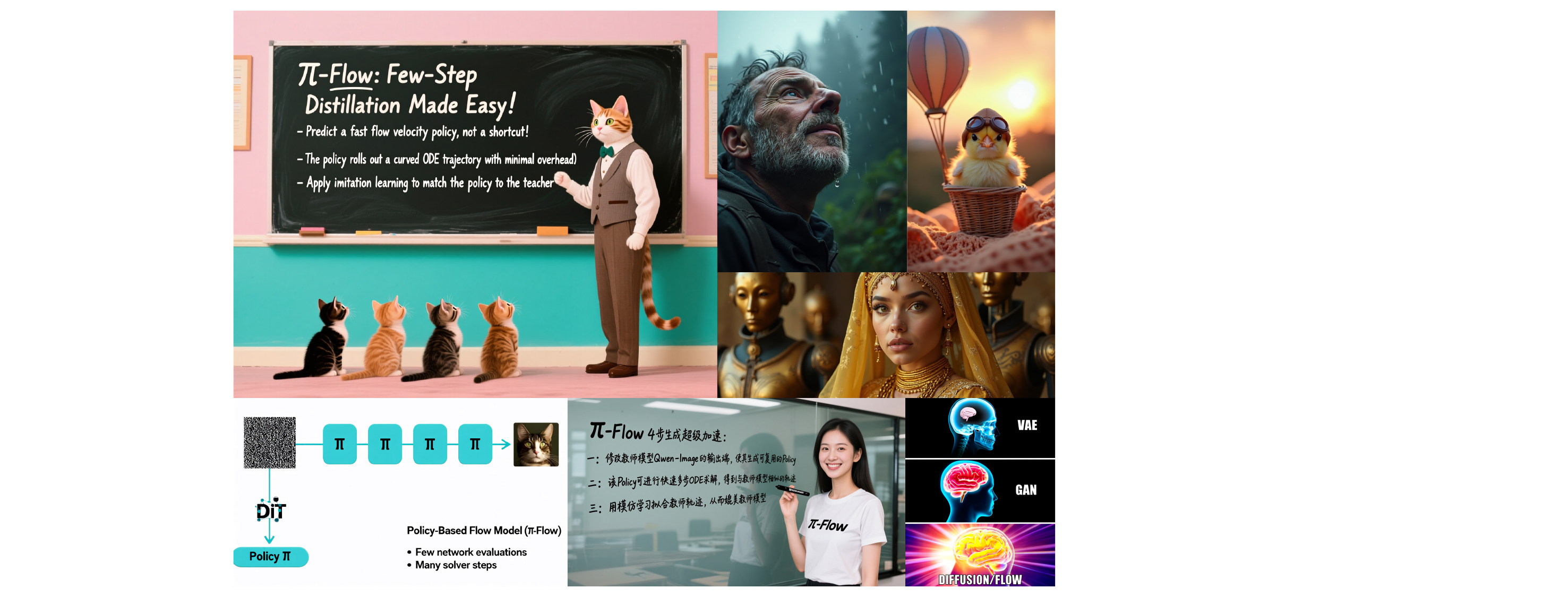}
\caption{High quality 4-NFE text-to-image generations by \methodname, distilled from FLUX.1-12B (top-right three images) and Qwen-Image-20B (all remaining images). \methodname{} preserves the teacher’s coherent structures, fine details (e.g., skin and hair), and accurate text rendering, while avoiding diversity collapse (see Fig.~\ref{fig:compare_teacher_vsd} for sample diversity).}
\label{fig:teaser}
\end{figure}

\begin{abstract}
Few-step diffusion or flow-based generative models typically distill a velocity-predicting teacher into a student that predicts a shortcut towards denoised data. This format mismatch has led to complex distillation procedures that often suffer from a quality--diversity trade-off. To address this, we propose \emph{policy-based flow models} (\methodname). 
\methodname{} modifies the output layer of a student flow model to predict a network-free policy at one timestep. The policy then produces dynamic flow velocities at future substeps with negligible overhead, enabling fast and accurate ODE integration on these substeps without extra network evaluations.
To match the policy's ODE trajectory to the teacher's,
we introduce a novel imitation distillation approach, which matches the policy's velocity to the teacher's along the policy's trajectory using a standard $\ell_2$ flow matching loss. 
By simply mimicking the teacher's behavior, \methodname{} enables stable and scalable training and avoids the quality--diversity trade-off.
On ImageNet 256\textsuperscript{2}, it attains a 1-NFE FID of 2.85, outperforming previous 1-NFE models of the same DiT architecture. 
On FLUX.1-12B and Qwen-Image-20B at 4 NFEs, \methodname{} achieves substantially better diversity than state-of-the-art DMD models, while maintaining teacher-level quality.
\ificlrfinal\else
Code and models will be released publicly.
\fi
\end{abstract}

\section{Introduction}
Diffusion and flow matching models~\citep{sohl2015deep,ddpm,song2019score,lipman2023flow,albergo2023building} have become 
the dominant method for visual generation, delivering compelling image quality and diversity. 
However, these models rely on a costly denoising process for inference, which integrates a probability flow ODE~\citep{song2021scorebased} over multiple timesteps, each step requiring a neural network evaluation. Commonly, the inference cost of diffusion models is quantified by the number of function (network) evaluations (NFEs).

To reduce the inference cost, diffusion distillation methods compress a pre-trained multi-step model (the teacher) into a 
student that requires only one or a few network evaluation steps.
Existing distillation approaches avoid ODE integration by taking one or a few shortcut steps that map noise to data, where each shortcut path is predicted by the student network, referred to as a \emph{shortcut-predicting} model. 
Learning these shortcuts is a significant challenge because they cannot be directly inferred from the teacher model. This necessitates the use of complex training methods, such as progressive distillation~\citep{salimans2022progressive, liu2022flow, instaflow}, consistency distillation~\citep{consistencymodels}, and distribution matching~\citep{ladd,yin2024onestep,yin2024improved,salimans2024multistep}. In turn, the sophisticated training often lead to degraded image quality from error accumulation or compromised diversity due to mode collapse. 

To sidestep the difficulties in shortcut-predicting distillation, we propose a novel \emph{policy-based flow model} (\methodname{} or pi-Flow) paradigm: 
given noisy data at one timestep, the student network predicts a network-free policy, which maps new noisy states to their corresponding flow velocities with negligible overhead, allowing fast and accurate ODE integration using multiple substeps of policy velocities instead of network evaluations. 

To train the student network, we introduce \emph{policy-based imitation distillation} ($\pi$-ID), a DAgger-style~\citep{dagger} on-policy imitation learning (IL) method. $\pi$-ID trains the policy on its own trajectory: at visited states, we query the teacher velocity and match the policy's output to it, using the teacher’s corrective signal to teach the policy to recover from its own mistakes and reduce error accumulation. Specifically, the matching employs a standard $\ell_2$ loss aligned with the teacher’s flow matching objective, thus naturally preserving its quality and diversity.

We validate our paradigm with two types of policies: a simple dynamic-$\hat{\vx}_0^{(t)}$ (DX) policy and an advanced 
GMFlow policy based on \citet{gmflow}. Experiments show that GMFlow policy outperforms DX policy and delivers strong ImageNet 256\textsuperscript{2} FIDs at 1- and 2-NFE generation. To demonstrate its scalability, we distill FLUX.1-12B~\citep{flux2024} and Qwen-Image-20B~\citep{qwen} text-to-image models into 4-NFE \methodname{} students, which achieve state-of-the-art diversity, while maintaining teacher-level quality.

We summarize the contributions of this work as follows:
\begin{itemize}
    \item We propose \methodname{}, a new paradigm that decouples ODE integration substeps from network evaluation steps, enabling both fast generation and straightforward distillation.
    \item We introduce $\pi$-ID, a novel on-policy IL method for few-step \methodname{} distillation, which reduces the training objective to a simple $\ell_2$ flow matching loss.
    \item We demonstrate strong performance and scalability of \methodname{}, particularly, its superior diversity and teacher alignment compared to other state-of-the-art 4-NFE text-to-image models.
\end{itemize}

\section{Preliminaries}
\label{sec:preliminaries}

In this section, we briefly introduce flow matching models~\citep{lipman2023flow, liu2022flow} 
and the notations used in this paper. 

Let $p(\vx_0)$ denote the (latent) data probability density, where $\vx_0 \in \R^D$ is a data point. A standard flow model defines an interpolation between a data sample and a random Gaussian noise $\bm{\epsilon} \sim \mathcal{N}(\bm{0}, \mI)$, yielding the diffused noisy data $\vx_t=\alpha_t \vx_0 + \sigma_t \bm{\epsilon}$, where $t \in (0, 1]$ denotes the diffusion time, and $\alpha_t = 1 - t, \sigma_t = t$ are the linear flow noise schedule. The optimal transport map across all marginal densities $p(\vx_t) = \int_{\R^D} \mathcal{N}(\vx_t; \alpha_t \vx_0, \sigma_t^2 \mI) p(\vx_0) \diff \vx_0$ can be described by the following probability flow ODE~\citep{song2021scorebased,liu2022rectifiedflowmarginalpreserving}:
\begin{equation}
    \frac{\diff \vx_t}{\diff t} = \dot{\vx}_t = \frac{\vx_t - \E_{\vx_0 \sim p(\vx_0 | \vx_t)} [\vx_0]}{t} = \frac{\vx_t - \int_{\R^D} \vx_0 p( \vx_0 | \vx_t) \diff \vx_0}{t},
    \label{eq:pf_ode}
\end{equation}
with the denoising posterior $p(\vx_0 | \vx_t) \coloneqq \frac{\mathcal{N}\left(\vx_t; \alpha_t \vx_0, \sigma_t^2 \mI\right) p(\vx_0)}{p(\vx_t)}$.
At test time, the model can generate samples by first initializing the noise $\vx_1 \gets \bm{\epsilon}$ and then solving the ODE to obtain $\lim_{t\to0}\vx_t$. 

% \textbf{Flow Matching.} 
In practice, flow matching models approximate the ODE velocity $\frac{\diff \vx_t}{\diff t}$ using a neural network $G_\vtheta(\vx_t,t)$ with learnable parameters $\vtheta$, trained using the $\ell_2$ flow matching loss:
\begin{equation}
    \mathcal{L}_\vtheta = \E_{t,\vx_0,\vx_t} \left[ \frac{1}{2} \left\| \vu - G_\vtheta(\vx_t,t) \right\|^2 \right], \quad \text{with sample velocity}\ \vu \coloneqq \frac{\vx_t - \vx_0}{t}.
    \label{eq:l2fm}
\end{equation}
Since each velocity query requires evaluating the network (Fig.~\ref{fig:policy}~(a)), flow matching models couple sampling efficiency with solver precision. Despite the progress in advanced solvers~\citep{Karras2022edm,deis,dpmsolver,dpmpp,unipc}, high-quality sampling typically requires over 10 steps due to inherent ODE truncation error, making it computationally expensive.

% \section{Policy Imitation Distillation}
\section{\methodname: Policy-Based Few-Step Generation}

\begin{figure}
    \centering
    \includegraphics[width=1.0\linewidth]{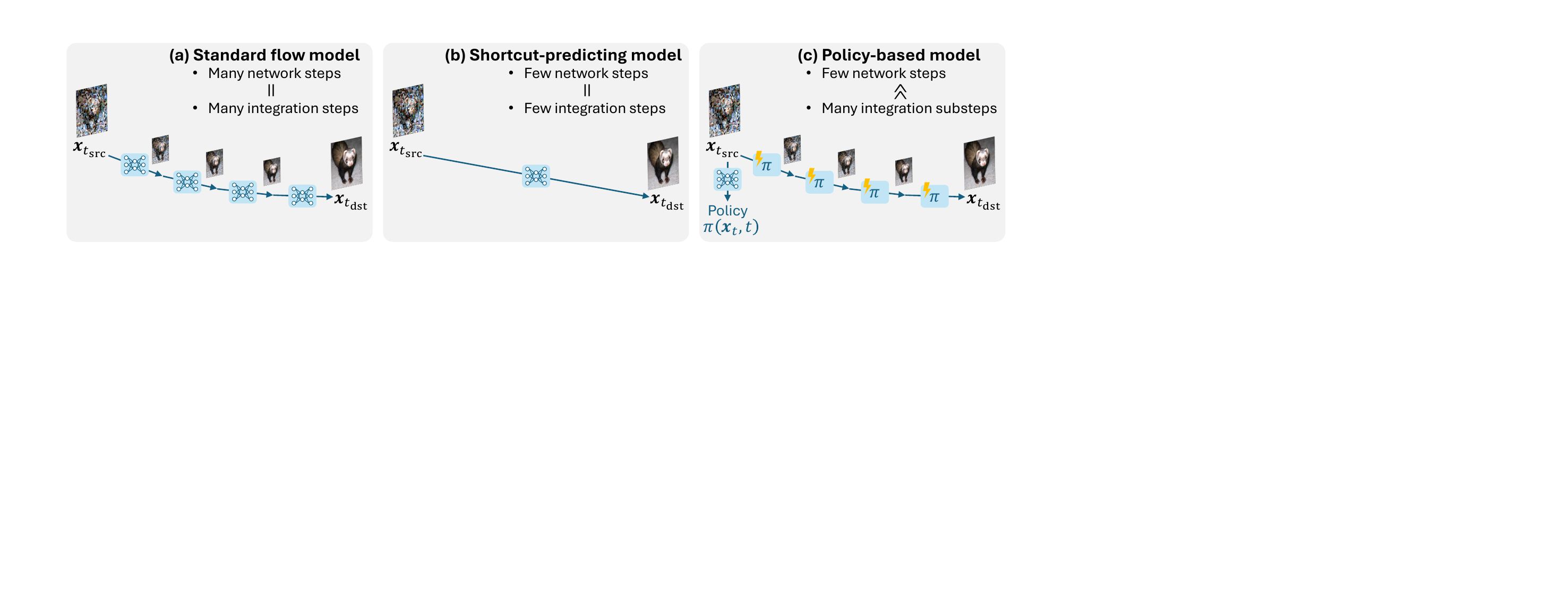}
    \caption{Comparison between (a) standard flow model (teacher), (b) shortcut-predicting model, and (c) our policy-based model. The shortcut-predicting model skips all intermediate states, whereas the our-based model retains all intermediate substeps with minimal overhead. }
    \label{fig:policy}
\end{figure}

In \methodname, we define the policy as a \emph{network-free} function $\pi \colon \R^D\times \R \to \R^D$ that maps a state $(\vx_t, t)$ to a flow velocity. A policy can be network-free if it only needs to describe a single ODE trajectory, which is fully determined by its initial state $(\vx_{t_\text{src}}, t_\text{src})$ with $t_\text{src} \ge t$. In this case, the policy for each trajectory must be dynamically predicted by a neural network conditioned on that initial state $(\vx_{t_\text{src}}, t_\text{src})$. We therefore adapt a flow model to output not a single velocity, but an entire dynamic policy that governs the full trajectory. Formally, define the policy function space $\mathcal{F}\coloneqq \Set{\pi \colon \R^D\times \R \to \R^D}$. Then, our goal is to distill a policy generator network $G_\vphi\colon \R^D\times \R \to \mathcal{F}$ with learnable parameters $\vphi$, such that $\pi(\vx_t, t) = G_\vphi(\vx_{t_\text{src}}, t_\text{src})(\vx_t, t)$. 

As shown in Fig.~\ref{fig:policy}~(c), \methodname{} performs ODE-based denoising from $t_\text{src}$ to $t_\text{dst}$ via two stages:
\begin{itemize}
    \item A \emph{policy generation step}, which feeds the initial state $(\vx_{t_\text{src}}, t_\text{src})$ to the student network $G_\vphi$ to produce the policy $\pi$, i.e., $\pi \gets G_\vphi(\vx_{t_\text{src}}, t_\text{src})$.
    \item Multiple \emph{policy integration substeps}, which integrates the ODE by querying the policy velocity over multiple substeps, obtaining a less noisy state by $\vx_{t_\text{dst}} \gets \vx_{t_\text{src}} + \int_{t_\text{src}}^{t_\text{dst}} \pi(\vx_t,t) \diff t$.
\end{itemize}
Unlike previous few-step distillation methods, \methodname{} decouples network evaluation steps from ODE integration substeps. This allows it to combine the key advantages of two paradigms: it performs only a few network evaluations for efficient generation, similar to a shortcut-predicting model, while also executing dense integration substeps, just like a standard flow matching teacher.
Thanks to its teacher-like ODE integration process, a \methodname\ student offers unprecedented advantage in training, as we can now follow well-established imitation learning (IL) approaches to directly match the policy velocity $\pi(\vx_t, t)$ to the teacher velocity $G_\vtheta(\vx_t, t)$, as discussed later in \S~\ref{sec:imitation}.

To identify the appropriate function classes of student policies for fast image generation, we need to consider the following requirements:
\begin{itemize}
    \item \textbf{Efficiency.} The policy should provide closed-form velocities with minimal overhead, so that rolling out dense (e.g., 100+) substeps incurs negligible cost compared to a network evaluation.
    \item \textbf{Compatibility.} The policy should have a compact set of parameters that can be easily predicted by the student $G_\vphi$ with standard backbones (e.g., DiT~\citep{dit}).
    \item \textbf{Expressiveness.} The policy should be able to approximate a complicated ODE trajectory starting from a certain initial state $\vx_{t_\text{src}}$.
    \item \textbf{Robustness.} The policy should be able to handle trajectory variations that arise from perturbations to the initial state $\vx_{t_\text{src}}$. For instance, a suboptimal student network will produce an erroneous mapping from $\vx_{t_\text{src}}$ to $\pi$. This introduces the randomness that the policy needs to accommodate throughout the rollout. Consequently, the policy function should adapt its velocity output to variations in its state input $\vx_t$, which is a challenging requirement for network-free functions.
\end{itemize}

\subsection{Dynamic-\texorpdfstring{$\hat{\vx}_0^{(t)}$}{x0} Policy}
We introduce a simple baseline policy called dynamic-$\hat{\vx}_0^{(t)}$ policy (\emph{DX} policy). DX policy defines $\pi(\vx_t, t)\coloneqq \frac{\vx_t - \hat{\vx}_0^{(t)}}{t}$, 
where $\hat{\vx}_0^{(t)}$ approximates the posterior moment $\E_{\vx_0 \sim p(\vx_0 | \vx_t)} [\vx_0]$ in Eq.~(\ref{eq:pf_ode}). 
Along a fixed trajectory starting from an initial state $(\vx_{t_\text{src}}, t_\text{src})$, the posterior moment is only dependent on $t$. 
Therefore, we first predict a grid of $\hat{\vx}_0^{(t_i)}$ at $N$ evenly spaced times 
$t_1, ..., t_N \in [t_\text{dst}, t_\text{src}]$ by a single evaluation of the student network $G_\vphi(\vx_{t_\text{src}}, t_\text{src})$. 
This is achieved by expanding the output channels of the student network and performing $\vu$-to-$\vx_0$ reparameterization.
Then, for arbitrary $t \in [t_\text{dst}, t_\text{src}]$, we obtain the approximated moment $\hat{\vx}_0^{(t)}$ by a linear interpolation over the grid. 

Apparently, DX policy is fast, compatible, and expressive enough so that any $N$-step teacher trajectory can be matched with $N$ grid points. 
However, its robustness is limited because $\hat{\vx}_0^{(t)}$ is not adaptive to perturbations in $\vx_t$. 

\subsection{GMFlow Policy}
\label{sec:gmpolicy}

For stronger robustness, we incorporate an advanced GMFlow policy based on the closed-form GM velocity field in \citet{gmflow}.
GMFlow policy expands the network output channels to predict a factorized Gaussian mixture (GM) velocity distribution $q(\vu | \vx_{t_\text{src}}) = \prod_{i=1}^{L} \sum_{k=1}^K A_{ik} \mathcal{N}\left(\vu_i; \vmu_{ik}, s^2\mI\right)$, where $A_{ik} \in \R_+$, $\vmu_{ik} \in \R^C$, $s \in \R_+ $ are GM parameters predicted by the network,
$L\times C$ factorizes the data dimension $D$ into sequence length $L$ and channel size $C$, and $K$ is a hyperparameter specifying the number of mixture components. 
\edit{Intuitively, the student network $G_\vphi$ maps the initial state $\vx_{t_\text{src}}$ to multiple denoising modes that parameterize the GMFlow policy.
The policy then enables a closed-form velocity expression at future state $(\vx_t, t)$} for any $0 < t < t_\text{src}$ (see \S~\ref{sec:gm_u} for details). The speed and compatibility of GMFlow has already been discussed in \citet{gmflow}, thus we focus on analyzing its expresiveness and robustness.

\textbf{Expressiveness.}
With the $L\times C$ factorization, each individual $C$-dimensional GM needs to be expressive enough to approximate a $C$-dimensional chunk of the teacher trajectory.
In \S~\ref{sec:proof}, we rigorously prove the following theorem, demonstrating GMFlow's expressiveness.
\begin{theorem}[A GMFlow policy with $K=N\cdot C$ can accurately approximate any $N$-step trajectory]
\label{the:gmode}
Given pairwise distinct times $t_1,\ldots,t_N\in(0,1]$ and vectors 
$\vx_{t_n},\,\dot{\vx}_{t_n}\in\R^C$ for $n=1,\ldots,N$, there exists a GM parameterization of $p(\vx_0)$ with $N\cdot C$ components, such that $\dot{\vx}_{t_n}$ can be approximated arbitrarily well using Eq~(\ref{eq:pf_ode}) at $t=t_n$ for every $n=1,\ldots,N$.
\end{theorem}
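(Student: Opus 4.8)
The plan is to recast the statement as a denoiser‑interpolation problem and solve it with an explicit axis‑aligned Gaussian mixture. Since $\alpha_t=1-t$ and $\sigma_t=t$, Eq.~(\ref{eq:pf_ode}) shows that matching $\dot{\vx}_{t_n}$ at $(\vx_{t_n},t_n)$ is equivalent to matching the posterior mean $\E_{\vx_0\sim p(\vx_0|\vx_{t_n})}[\vx_0]$ to the target $\hat{\vx}_0^{(n)}\coloneqq \vx_{t_n}-t_n\dot{\vx}_{t_n}\in\R^C$. So it suffices to build a GM $p(\vx_0)=\sum_j w_j\,\mathcal{N}(\vx_0;\vmu_j,s^2\mI)$ on $\R^C$ with $N\cdot C$ components whose denoiser $D(\vx,t)\coloneqq\E_{\vx_0\sim p(\vx_0\mid\vx_t=\vx)}[\vx_0]$ satisfies $D(\vx_{t_n},t_n)\approx\hat{\vx}_0^{(n)}$ for all $n$, arbitrarily well. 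For a GM prior, $p(\vx_t)$ is again a GM with component means $(1-t)\vmu_j$ and variances $\Lambda_t\coloneqq(1-t)^2s^2+t^2$, giving the closed form $D(\vx,t)=\sum_j r_j(\vx,t)\big(\tfrac{t^2}{\Lambda_t}\vmu_j+\tfrac{(1-t)s^2}{\Lambda_t}\vx\big)$ with softmax responsibilities $r_j(\vx,t)\propto w_j\exp\!\big(-\|\vx-(1-t)\vmu_j\|^2/(2\Lambda_t)\big)$ — this is the GMFlow velocity field of \citet{gmflow} rewritten in the $\vx_0$ parameterization.

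Next, index the $NC$ components by $(m,c)$ with $m\in\{1,\dots,N\}$, $c\in\{1,\dots,C\}$, and take \emph{axis‑aligned} means $\vmu_{m,c}=\nu_{m,c}\,\ve_c$, where $\ve_c$ is the $c$-th standard basis vector and $\nu_{m,c}\in\R$ is free. A short computation shows that only the $c$-th coordinate of $\vmu_{m,c}$ enters coordinate $c$ of the output, so $D(\vx,t)_e=\tfrac{t^2}{\Lambda_t}\sum_{m}r_{m,e}(\vx,t)\,\nu_{m,e}+\tfrac{(1-t)s^2}{\Lambda_t}\vx_e$, and $r_{m,e}(\vx,t)$ depends on $\vx$ only through $\vx_e$ and the single global normalizer $Z(\vx,t)=\sum_{m',c'}w_{m',c'}\exp(\cdots)$. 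Thus, up to the coupling carried by $Z$, the problem splits into $C$ one‑dimensional subproblems: for each coordinate $e$, choose $N$ scalars $\nu_{1,e},\dots,\nu_{N,e}$ and positive weights $w_{1,e},\dots,w_{N,e}$ so that the induced one‑dimensional softmax average at the $N$ prescribed states/times/temperatures hits the $N$ prescribed values $z_{n,e}$, a bounded rescaling of $(\hat{\vx}_0^{(n)})_e$ (explicitly $z_{n,e}=\tfrac{C\Lambda_{t_n}}{t_n^2}\big((\hat{\vx}_0^{(n)})_e-\tfrac{(1-t_n)s^2}{\Lambda_{t_n}}(\vx_{t_n})_e\big)\to C(\hat{\vx}_0^{(n)})_e$ as $s\to0$). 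This is exactly where the factor $N\cdot C$ comes from: $N$ atoms per target, one per axis. The coupling through $Z$ I would close at the end by a fixed‑point/continuity argument (or by absorbing it into the $z_{n,e}$), since it enters continuously and, in the sharp‑kernel limit below, only through a harmless overall factor.

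For each 1‑D subproblem I would argue in the limit $s\to0$: the softmax sharpens, so $\sum_m r_{m,e}\nu_{m,e}$ converges to the value of whichever atom is most responsible at the query, where responsibility is governed by $-\tfrac{(y-(1-t)\nu_{m,e})^2}{2t^2}+\log w_{m,e}$. Ordering $t_1>\dots>t_N$, I would place the $n$-th atom so that it is \emph{resonant} at $(\vx_{t_n},t_n)$ and exponentially damped at every other query: the key lever is that the weight exponent contains the quadratic $-\tfrac{(1-t)^2\nu^2}{2t^2}$, and $\tfrac{(1-t)^2}{t^2}$ is strictly decreasing in $t$, so atoms of different magnitudes are naturally active in different time‑windows; a suitable magnitude hierarchy plus a logarithmic weight boost can be tuned to put the window of atom $n$ around $t_n$. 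Once the limiting denoiser equals $\hat{\vx}_0^{(n)}$ at every $n$, continuity in $s$ (and in $\vmu$, $w$) yields, for any $\eps>0$, a genuine GM with $s>0$ and all weights positive that matches all $N$ targets within $\eps$ — which is the ``arbitrarily well'' in the statement.

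The main obstacle is the step of certifying, for \emph{every} $n$ simultaneously, that the intended atom strictly beats all others at $(\vx_{t_n},t_n)$: raising one atom's weight helps it at all times, and for adversarial input vectors $\{\vx_{t_n},\dot{\vx}_{t_n}\}$ the ``natural'' Gaussian weights can order themselves against the intended assignment (indeed a single atom generically cannot be localized to an \emph{interior} time while also carrying a prescribed value, so some targets may need a small fixed bundle of co‑resonant atoms — e.g.\ a balanced pair straddling the resonance whose weight ratio tunes the value — which the $N\cdot C$ budget is designed to accommodate). Concretely I expect to introduce one large scale parameter $R$, place the $\nu_{m,e}$ along a nested hierarchy in $R$, and check that every off‑diagonal exponent gap is bounded below by a positive quantity that $\to\infty$ as $R\to\infty$, while the intra‑bundle ratios converge to the prescribed convex weights. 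This finite family of strict inequalities, together with the normalizer fixed point of the second paragraph, is where the real work lies; the reduction and the continuity argument are routine.
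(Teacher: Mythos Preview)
Your reduction to matching the denoiser $D(\vx_{t_n},t_n)=\hat{\vx}_0^{(n)}$ is right, and finishing by widening Dirac atoms into small-variance Gaussians is exactly how the paper closes. The substantive difference is that you attempt an \emph{explicit} placement of the $N\cdot C$ atoms, whereas the paper gives a non-constructive existence proof; your construction has a genuine gap at its core.

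The tension is this. Your separation mechanism across times is a magnitude hierarchy in the $\nu_{m,e}$ --- atoms at widely different scales are active in different $t$-windows because $(1-t)^2/t^2$ is monotone --- but your value mechanism pins $\nu_{n,e}=C(\hat{\vx}_0^{(n)})_e$, an arbitrary number fixed by the data. You cannot have both. The proposed escape, ``balanced pairs straddling the resonance whose weight ratio tunes the value'', decouples value from scale but costs two atoms per target per coordinate; the claim that the $N\cdot C$ budget accommodates this is not argued. A concrete obstruction to the resonance picture: take $C=1$, $N=2$, $\vx_{t_1}=\vx_{t_2}=0$, targets $\hat{x}_0^{(1)}=1$, $\hat{x}_0^{(2)}=-1$; then $\nu_1=1,\nu_2=-1$ have identical Gaussian exponents at both queries, so no weight choice makes different atoms dominate at $t_1$ versus $t_2$. (A two-atom Dirac solution does exist here, but only as a genuine softmax mixture with $|\nu_1|\neq|\nu_2|$ solving a transcendental equation --- not via your hard-max picture.) The cross-coordinate coupling through $Z$ is a second issue: in the sharp limit a single atom wins globally, forcing all other coordinates to $0$, so the ``harmless overall factor'' claim is precisely what fails.

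The paper sidesteps all of this. It factorizes $p(\diff\vx_0)=\prod_i p(\diff x_{i,0})$ as a \emph{product} measure, which fully decouples the coordinates (no shared normalizer). For each $1$-D problem it proves $\bm{0}\in\mathrm{conv}\{\vgamma(x_0):x_0\in\R\}\subset\R^N$, where $\vgamma$ stacks the $N$ Fredholm kernels, by a separating-hyperplane contradiction: any putative $h(x_0)=\sum_n w_n g(t_n,x_0)\le0$ is dominated as $|x_0|\to\infty$ by the slowest-decaying Gaussian term (this is where the monotonicity of $(1-t)^2/t^2$ enters --- the same observation you made), and its linear prefactor changes sign. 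Carath\'eodory then gives a finite $1$-D atomic measure; the resulting product measure on $\R^C$ satisfies all $N\cdot C$ scalar moment equations; and the Richter--Tchakaloff quadrature theorem collapses it to at most $\dim\mathrm{span}\{g_i(t_n,\cdot)\}\le N\cdot C$ Dirac atoms with the same moments. The $N\cdot C$ count thus comes from a dimension-counting quadrature theorem applied \emph{after} existence is established, not from a direct atom-placement scheme.
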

In practice, we can use $K\ll N \cdot C$ (e.g., $K=8$) since the teacher trajectory is mostly smooth. More analysis of GMFlow hyperparameters are presented in \S~\ref{sec:gmparam}.

\textbf{Robustness.}
GMFlow is highly robust against trajectory perturbation due to its probabilistic origin. Unlike DX policy, GMFlow models a fully dynamic denoising posterior (Eq.~(\ref{eq:gm_posterior})) dependent on both $\vx_t$ and $t$. 
Leveraging its robustness, the policy can be flexibly altered via GM dropout in training (\S~\ref{sec:imitation}) and GM temperature in inference (\S~\ref{sec:gm_temp}), both improving generalization performance.

\begin{figure}[t]
\centering
\begin{minipage}[t]{0.54\linewidth}
    \vspace{-0.8ex} % keep both sides consistent
    \begin{algorithm}[H]
    \small
    \DontPrintSemicolon
    \KwIn{$\mathit{NFE}$, teacher $G_\vtheta$, student $G_\vphi$, condition $\vc$}
    Sample $t_\text{src}$ from $\Set{\frac{1}{\mathit{NFE}}, \frac{2}{\mathit{NFE}}, \cdots, 1}$ \\
    Initialize $\vx_{t_\text{src}}$ (data-free or data-dependent) \\
    {\color{student}$\pi \gets G_\vphi(\vx_{t_\text{src}}, t_\text{src},\vc)$} \\
    {\color{detached}$\pi_\text{D} \gets \operatorname{stopgrad}(\pi)$} \\
    $\mathcal{L}_\vphi \gets 0$ \\
    \For{finite samples $t \sim U\left(t_\textnormal{src} - \frac{1}{\mathit{NFE}}, t_\textnormal{src}\right)$}{
    {\color{detached}$\vx_t \gets \vx_{t_\text{src}} + \int_{t_\text{src}}^t \pi_\text{D}(\vx_t,t) \diff t$} \\
    % \tcp*{rollout the detached policy}
    $\mathcal{L}_\vphi \gets \mathcal{L}_\vphi + \frac{1}{2} \left\|{{\color{teacher}G_\vtheta(\vx_t, t, \vc)} - \color{student}\pi(\vx_t, t)}\right\|^2$ \\
    }
    $\vphi \gets \operatorname{Adam}\left(\vphi, \nabla_\vphi \mathcal{L}_\vphi \right)$ \tcp*{optimizer step}
    % \tcp*{match the policy to the teacher}
    \caption{On-policy $\pi$-ID.}
    \label{alg:train_simple}
    \end{algorithm}
\end{minipage}\hfill
\begin{minipage}[t]{0.42\linewidth}
    \vspace{0ex} % force top alignment for [t]
    \centering
    \includegraphics[width=\linewidth]{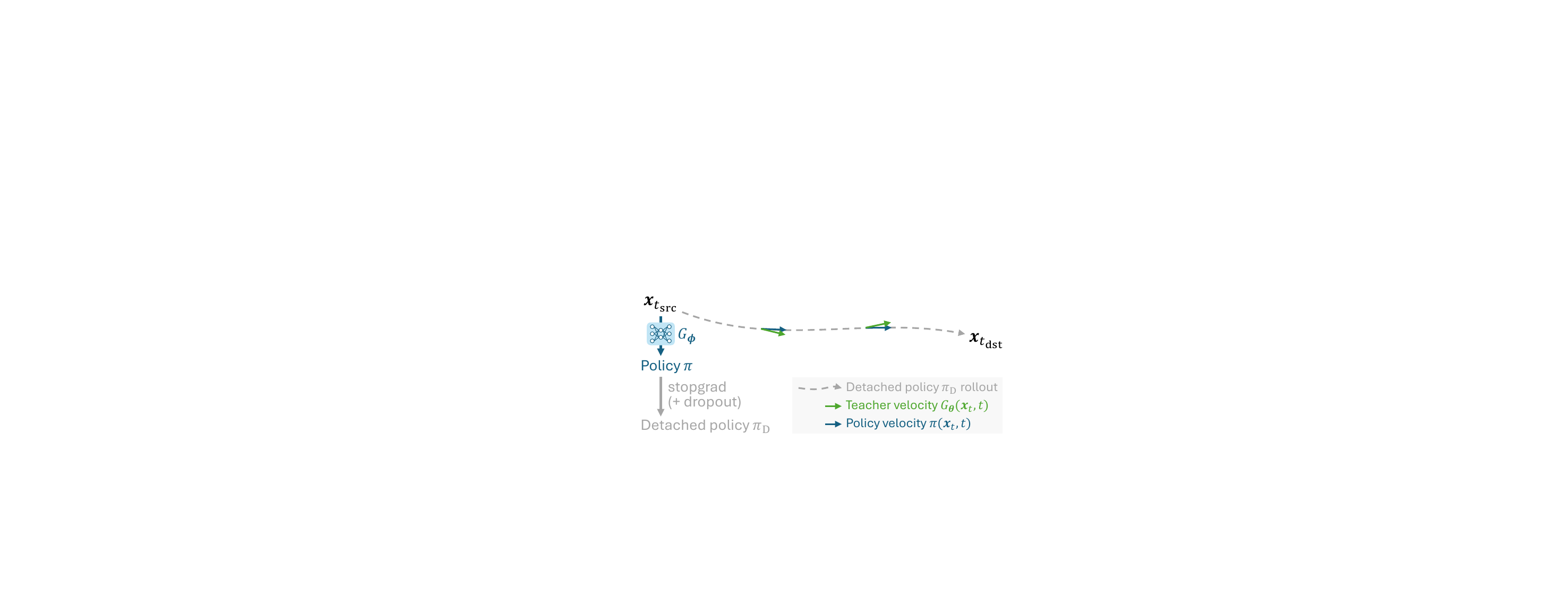}
    \caption{On-policy flow imitation distillation. Intermediate states are sampled along the {\color{detached}detached policy rollout}, where the loss matches the {\color{student}policy} to the {\color{teacher}teacher}.}
    \label{fig:imitation}
\end{minipage}
\vspace{-0.4em}
\end{figure}

\section{\texorpdfstring{$\pi$}{pi}-ID: Policy-Based Imitation Distillation}
\label{sec:imitation}

With the policy rollout sharing the same format as the teacher’s ODE integration, 
it is straightforward to adopt imitation learning to learn the policy by directly matching the policy's velocity to the teacher's velocity. 
In this section, we introduce a simple policy-based imitation distillation ($\pi$-ID) algorithm based on DAgger-style~\citep{dagger} on-policy imitation.

On-policy imitation learning is robust to error accumulation since it trains the policy on its own trajectory, allowing the teacher's corrective signal to steer a deviating trajectory back on track.
As shown in Fig.~\ref{fig:imitation} and Algorithm~\ref{alg:train_simple}, for a time interval from $t_{\text{src}}$ to $t_{\text{dst}}$ (i.e., a 1-NFE segment), we first feed the initial state $(\vx_{t_{\text{src}}}, t_{\text{src}})$ to the student network $G_\vphi$ to obtain the policy $\pi$. We then sample an intermediate time $t \in (t_\text{dst}, t_\text{src}]$  and roll out a \emph{detached} policy $\pi_{\text{D}}$ from $t_{\text{src}}$ to $t$ using high-accuracy ODE integration (with a small step size of $1/128$), yielding an intermediate state $\vx_t$ on the policy trajectory. This state is fed to both the learner policy $\pi$ and the frozen teacher $G_\vtheta$, which produce their respective velocities. Finally, we compute a standard $\ell_2$ flow matching loss between the two velocities, and backpropagate its gradients through the policy $\pi$ to the student network $G_\vphi$. Because the student forward/backward pass dominates compute while policy and teacher queries are relatively cheap, we may repeat the rollout-and-matching step multiple times for additional teacher supervisions. In practice, we sample two intermediate states per student forward pass.

\textbf{Data-dependent and data-free $\pi$-ID.} The initial state $\vx_{t_{\text{src}}}$ can be obtained via forward diffusion from real data $\vx_0$ (data-dependent Algorithm~\ref{alg:train_datadepend}), or via \methodname's reverse denoising from random noise $\vx_1$ (data-free Algorithm~\ref{alg:train_datafree}). Both methods have roughly the same computational cost, and comparable performance, as demonstrated 
in the experiments (\S~\ref{sec:experiments}).  

\textbf{\edit{Error bounds and convergence.}} \edit{As discussed by \citet{dagger}, on-policy imitation learning guarantees that the performance of the learned policy is bounded by the teacher's performance plus an error term that scales as $O(n\bm{\varepsilon})$, where $n$ is the number of substeps and $\varepsilon$ is the average imitation error per step, which is strictly better than the $O(n^2\bm{\varepsilon})$ compounding-error behavior of off-policy behavior cloning. Moreover, the sequence of on-policy iterates converges in performance to the best policy in the function class, under the student's capacity constraint.}

\section{Experiments}
\label{sec:experiments}

To demonstrate the versatility of \methodname, we evaluate it with three distinct image generation models 
of different scales and architectures: DiT(SiT)-XL/2 (675M)~\citep{dit,sit,attention} for ImageNet 256\textsuperscript{2}~\citep{imagenet} class-conditioned generation, FLUX.1-12B~\citep{flux2024} and Qwen-Image-20B~\citep{qwen} for text-to-image generation.

\subsection{Implementation Details}
In this subsection, we discuss key implementation details essential to model performance. More training details and hyperparameter choices are presented in \S~\ref{sec:hyperparam}.

\textbf{GM dropout.} 
Dropout is a widely adopted technique in supervised/imitation learning and reinforcement learning to improve generalization~\citep{dropout,cobbe2019}. 
For the GMFlow policy, we introduce GM dropout in training to stochastically perturb and diversify $\pi$-ID rollouts to make the policy more robust to potential trajectory variations. 
Given the GM mixture weights $A_{ik}$ of the detached policy $\pi_{\text{D}}$, we sample a binary mask for each component $k=1,\cdots,K$ and multiply it into $A_{ik}$ synchronously across all $i=1,\cdots,L$. The masked weights are then renormalized and used for the detached rollout. By exploring alternative GM modes, this simple technique improves the policy's robustness, yielding better FID on ImageNet 256\textsuperscript{2} (\S~\ref{sec:imagenetexp}).

\textbf{Handling FLUX.1 dev teacher.} On-policy imitation learning assumes the teacher is robust to out-of-distribution (OOD) intermediate states and can steer trajectories back on track. This generally holds for standard flow models with classifier-free guidance (CFG)~\citep{cfg}, which exhibit error-correction behavior~\citep{chidambaram2024what}. However, FLUX.1 dev~\citep{flux2024} is a guidance-distilled model without true CFG and is less robust to OOD inputs. To mitigate OOD exposure, we adopt a scheduled trajectory mixing strategy, which rolls out the trajectory using a mixture of teacher and student with a linearly decaying teacher ratio (see \S~\ref{sec:scheduled} for details).

\begin{table}[t]
\centering
\begin{minipage}[t]{0.62\linewidth}
    \vspace{-0.8ex} % force top alignment for [t]
    \caption{1-NFE generation results of \methodname{} with DX and GMFlow policies on ImageNet. 
    Tested after 40K training iterations. 
    FM stands for standard flow matching.}
    % \vspace{0.8ex}
    \label{tab:imagenet_abl}
    \centering
    \scalebox{0.8}{%
        \setlength{\tabcolsep}{0.4em}
        \begin{tabular}{llcccc}
        \toprule
        \textbf{Policy} & \textbf{Teacher} & \textbf{FID\textdownarrow} & \textbf{IS\textuparrow} & \textbf{Precision\textuparrow} & \textbf{Recall\textuparrow} \\
        \midrule
        DX ($N=10$) & REPA &  4.73 & 327.6 & 0.781 & 0.514 \\
        DX ($N=20$) & REPA & 4.44 & 329.8 & 0.786 & 0.531 \\
        DX ($N=40$) & REPA & 4.90 & 321.8 & 0.778 & 0.537 \\
        GM ($K=8$) & REPA & \textbf{3.07} & 336.9 & 0.789 & \textbf{0.572} \\
        GM ($K=32$) & REPA & 3.08 & \textbf{341.7} & \textbf{0.791} & 0.562\\
        \midrule
        GM ($K=32$) & FM & \textbf{3.65} & \textbf{282.0} & 0.797 & \textbf{0.533} \\
        \small{GM ($K=32$) w/o dropout} & FM & 4.14 & 279.6 & \textbf{0.799} & 0.525 \\
        \bottomrule
        \end{tabular}
    }
\end{minipage} \hfill 
\begin{minipage}[t]{0.33\linewidth}
    \vspace{-0.8ex} % force top alignment for [t]
    \caption{Comparison with previous few-step DiTs on ImageNet.}
    % \vspace{0.8ex}
    \label{tab:imagenet_sota}
    \centering
    \scalebox{0.8}{%
        \begin{tabular}{lcc}
        \toprule
        \textbf{Model} & \textbf{NFE} & \textbf{FID\textdownarrow} \\
        \midrule
        iCT & 2 & 20.30 \\
        iMM & 1\texttimes 2 & 7.77 \\
        MeanFlow & 2 & 2.20 \\
        FACM (REPA) & 2 & \textbf{1.52} \\
        \textbf{\methodname{} (GM-REPA)} & 2 & 1.97 \\
        \midrule
        iCT & 1 & 34.24 \\
        Shortcut & 1 & 10.60 \\
        MeanFlow & 1 & 3.43 \\
        \textbf{\methodname{} (GM-FM)} & 1 & 3.34 \\
        \textbf{\methodname{} (GM-REPA)} & 1 & \textbf{2.85} \\
        \bottomrule
        \end{tabular}
    }
\end{minipage}
% \vspace{-0.4em}
\end{table}

\subsection{ImageNet DiT}
\label{sec:imagenetexp}

Our study utilizes two pretrained teachers with the same DiT architecture: a standard flow matching (FM) DiT (the baseline in \citet{gmflow}), and the REPA DiT~\citep{yu2025repa}. Interval CFG~\citep{intervalcfg} is applied to both teachers to maximize their performance. Each \methodname{} student is initialized with the teacher weights and then fully finetuned using the $\pi$-ID loss.

\textbf{Evaluation metrics.}
We adopt the standard evaluation protocol in ADM~\citep{dhariwal2021diffusion} with the following metrics: Fréchet Inception Distance (FID)~\citep{FID}, Inception Score (IS), and Precision–Recall~\citep{prmetric}.

\textbf{Comparison of DX and GMFlow policies.}
As shown in Table~\ref{tab:imagenet_abl}, both policies yield strong 1-NFE FIDs after 40k training iterations, with the GMFlow policy 
consistently outperforming the DX policy by a clear margin. Notably, the DX policy exhibits sensitivity to the hyperparameter $N$ (number of grid points), whereas the GMFlow policy produces consistent results across different values of $K$ (number of Gaussians).

\textbf{Comparison with prior few-step DiTs.}
In Table~\ref{tab:imagenet_sota}, we compare \methodname{} (GM policy with $K=32$) to prior few-step DiTs on ImageNet 256\textsuperscript{2}: iCT~\citep{song2024improved}, Shortcut models~\citep{frans2025one}, iMM~\citep{imm}, MeanFlow~\citep{meanflow}, and the concurrent work FACM~\citep{peng2025flowanchoredconsistencymodels}. FACM (distilled from REPA) improves MeanFlow with an auxiliary loss and attains a leading 2-NFE FID, though it still relies on the inefficient JVP operation. In contrast, \methodname{} uses a minimal training framework with no JVP and adaptive loss scalings, yet still outperforms the original MeanFlow DiT across both 1-NFE and 2-NFE generation.

\textbf{Ablation study on GM dropout.}
From the two bottom rows in Table~\ref{tab:imagenet_abl} we conclude that our standard implementation with a 0.05 GM dropout rate yields better FID and Recall compared to the setting without dropout, confirming the effectiveness of our GM dropout technique.

\subsection{FLUX.1-12B and Qwen-Image-20B}
\label{sec:flux_qwen}

For text-to-image generation, we distill the 12B FLUX.1 dev~\citep{flux2024} and 20B Qwen-Image~\citep{qwen} models into \methodname{} students. During student training, we freeze the base parameters inherited from the teacher and finetune only the expanded output layer along with 256-rank LoRA adapters~\citep{hu2022lora} on the feed-forward layers. For data-dependent distillation, we prepare 2.3M one-megapixel (1MP) images captioned with Qwen2.5-VL~\citep{bai2025qwen25vltechnicalreport}. In the data-free setting, we use only the generated captions as conditioning 
inputs while keeping the same 1MP resolution when initializing the noise.

\begin{table}[t]
    \centering
    \caption{Quantitative comparisons on COCO-10k dataset and HPSv2 prompt set.}
    \label{tab:fluxqwen}
    \scalebox{0.69}{%
        \setlength{\tabcolsep}{0.35em}
        \begin{tabular}{lcccccccccccc}
        \toprule
        \multirow{3}[3]{*}{\textbf{Model}} & \multirow{3}[3]{*}{\textbf{Distill method}} & \multirow{3}[3]{*}{\textbf{NFE}} & \multicolumn{5}{c}{COCO-10k prompts} & \multicolumn{5}{c}{HPSv2 prompts} \\
        \cmidrule(lr){4-8} \cmidrule(lr){9-13} 
        & & & \multicolumn{2}{c}{Data align.} & \multicolumn{2}{c}{Prompt align.} & Pref. align. & \multicolumn{2}{c}{Teacher align.} & \multicolumn{2}{c}{Prompt align.} & Pref. align. \\
        \cmidrule(lr){4-5} \cmidrule(lr){6-7} \cmidrule(lr){8-8} \cmidrule(lr){9-10} \cmidrule(lr){11-12} \cmidrule(lr){13-13}
        & & & \textbf{FID\textdownarrow} & \textbf{pFID\textdownarrow} & \textbf{CLIP\textuparrow} & \textbf{VQA\textuparrow} & \textbf{HPSv2.1\textuparrow} &  \textbf{FID\textdownarrow} & \textbf{pFID\textdownarrow} & \textbf{CLIP\textuparrow} & \textbf{VQA\textuparrow} & \textbf{HPSv2.1\textuparrow} \\
        \midrule
        FLUX.1 dev
            & - & 50 & 27.8 & 34.9 & 0.268 & 0.900 & 0.309 & - & - & 0.284 & 0.805 & 0.314 \\
        \midrule
        FLUX Turbo 
            & GAN & 8 & \textbf{26.7} & \textbf{32.0} & 0.267 & 0.900 & 0.308 & 13.8 & 18.5 & \textbf{0.286} & \textbf{0.814} & 0.313 \\
        Hyper-FLUX 
            & CD+Re & 8 & 29.8 & 33.3 & \textbf{0.268} & 0.894 & 0.309 & 15.6 & 22.2 & 0.285 & 0.807 & 0.315 \\
        \textbf{\methodname{} (GM-FLUX)} 
            & $\pi$-ID & 8 & 29.0 & 35.4 & \textbf{0.268} & \textbf{0.901} & \textbf{0.311} & \textbf{12.6} & \textbf{15.9} & 0.285 & 0.810 & \textbf{0.316} 
            \\
        \midrule
        SenseFlow (FLUX) 
            & VSD+CD+GAN & 4 & 34.1 & 44.2 & 0.266	& 0.879 & 0.308 & 23.3 & 28.2 & 0.283 & 0.806 & \textbf{0.318} \\
        \textbf{\methodname{} (GM-FLUX)} 
            & $\pi$-ID & 4 & 29.8 & \textbf{36.1} & \textbf{0.269} & 0.903 & 0.308 & \textbf{14.3} & \textbf{19.2} & \textbf{0.288} & \textbf{0.816} & 0.313 \\
        \textbf{\methodname{} (GM-FLUX)} 
            & $\pi$-ID (data-free) & 4 & \textbf{29.7} & 36.2 & \textbf{0.269} & \textbf{0.905} & \textbf{0.310} & 14.4 & 19.7 & 0.287 & 0.813 & 0.314 \\
        \midrule
        Qwen-Image 
            & - & 50\texttimes2 & 34.1 & 45.6 & 0.282 & 0.936 & 0.312 & - & - & 0.302 & 0.872 & 0.309 \\
        \midrule
        Qwen-Image Lightning 
            & VSD & 4 & 37.5 & 51.6 & 0.280 & 0.935 & \textbf{0.322} & 15.6 & 19.7 & 0.299 & \textbf{0.867} & \textbf{0.328} \\
        \textbf{\methodname{} (GM-Qwen)} 
            & $\pi$-ID & 4 & \textbf{36.0} & 46.1 & 0.281 & 0.934 & 0.314 & \textbf{12.8} & \textbf{16.6} & 0.300 & 0.860 & 0.310 \\
        \textbf{\methodname{} (GM-Qwen)} 
            & $\pi$-ID (data-free) & 4 & \textbf{36.0} & \textbf{45.7} & \textbf{0.282} & \textbf{0.936} & 0.315 & 12.9 & 16.8 & \textbf{0.301} & 0.862 & 0.312 \\
        \bottomrule
        \end{tabular}
    }
\end{table}

\begin{table}[t]
    \centering
    \caption{Quantitative comparisons on OneIG-Bench~\citep{oneig}.}
    \label{tab:oneig}
    \scalebox{0.8}{%
        \begin{tabular}{lccccccc}
        \toprule
        \textbf{Model} & \textbf{Distill Method} & \textbf{NFE} & \textbf{Alignment\textuparrow} & \textbf{Text\textuparrow} & \textbf{Diversity\textuparrow} & \textbf{Style\textuparrow}  & \textbf{Reasoning\textuparrow} \\
        \midrule
        FLUX.1 dev 
            & - & 50 & 0.790 & 0.556 & 0.238 & 0.370 & 0.257 \\
        \midrule
        FLUX Turbo 
            & GAN & 8 & 0.791 & 0.334 & \textbf{0.234} & \textbf{0.370} & 0.239 \\
        Hyper-FLUX 
            & CD+Re & 8 & 0.790 & \textbf{0.530} & 0.198 & 0.369 & 0.254 \\
        \textbf{\methodname{} (GM-FLUX)} 
            & $\pi$-ID & 8 & \textbf{0.792} & 0.517 & \textbf{0.234} & 0.369 & \textbf{0.256} \\
        \midrule
        SenseFlow (FLUX) 
            & VSD+CD+GAN & 4 & 0.776 & 0.384 & 0.151 & 0.343 & 0.238 \\
        \textbf{\methodname{} (GM-FLUX)} 
            & $\pi$-ID & 4 & \textbf{0.799} & 0.437 & \textbf{0.229} & 0.360 & \textbf{0.251} \\
        \textbf{\methodname{} (GM-FLUX)} 
            & $\pi$-ID (data-free) & 4 & \textbf{0.799} & \textbf{0.460} & 0.224 & \textbf{0.363} & 0.249\\
        \midrule
        Qwen-Image 
            & - & 50\texttimes2 & 0.880 & 0.888 & 0.194 & 0.427 & 0.306 \\
        \midrule
        Qwen-Image Lightning & VSD & 4 & \textbf{0.885} & \textbf{0.923} & 0.116 & 0.417 & \textbf{0.311} \\
        \textbf{\methodname{} (GM-Qwen)} 
            & $\pi$-ID & 4 & 0.875 & 0.892 & \textbf{0.180} & \textbf{0.434} & 0.298 \\
        \textbf{\methodname{} (GM-Qwen)} 
            & $\pi$-ID (data-free) & 4 & 0.881 & 0.890 & 0.176 & 0.433 & 0.300 \\
        \bottomrule
        \end{tabular}
    }
\end{table}

\textbf{Evaluation protocol.}
We conduct a comprehensive evaluation on 1024\textsuperscript{2} high-resolution image generation from three distinct prompt sets: (a) 10K captions from the COCO 2014 validation set~\citep{coco}, (b) 3200 prompts from the HPSv2 benchmark~\citep{hpsv2}, and (c) 1120 prompts from OneIG-Bench~\citep{oneig}. For the COCO and HPSv2 sets, we report common metrics including FID~\citep{FID}, patch FID (pFID)~\citep{lin2024sdxllightningprogressiveadversarialdiffusion}, CLIP similarity~\citep{clip}, VQAScore~\citep{vqascore}, and HPSv2.1~\citep{hpsv2}. On COCO prompts, FIDs are computed against real images, reflecting data alignment. On HPSv2, FIDs are computed against the 50-step teacher generations, reflecting teacher alignment. CLIP and VQAScore measure prompt alignment, while HPSv2 captures human preference alignment. For OneIG-Bench, we adopt its official evaluation protocol and metrics. All quantitative results are presented in Table~\ref{tab:fluxqwen} and \ref{tab:oneig}.

\begin{figure}[t]
    \raggedleft
    \includegraphics[width=1.0\linewidth]{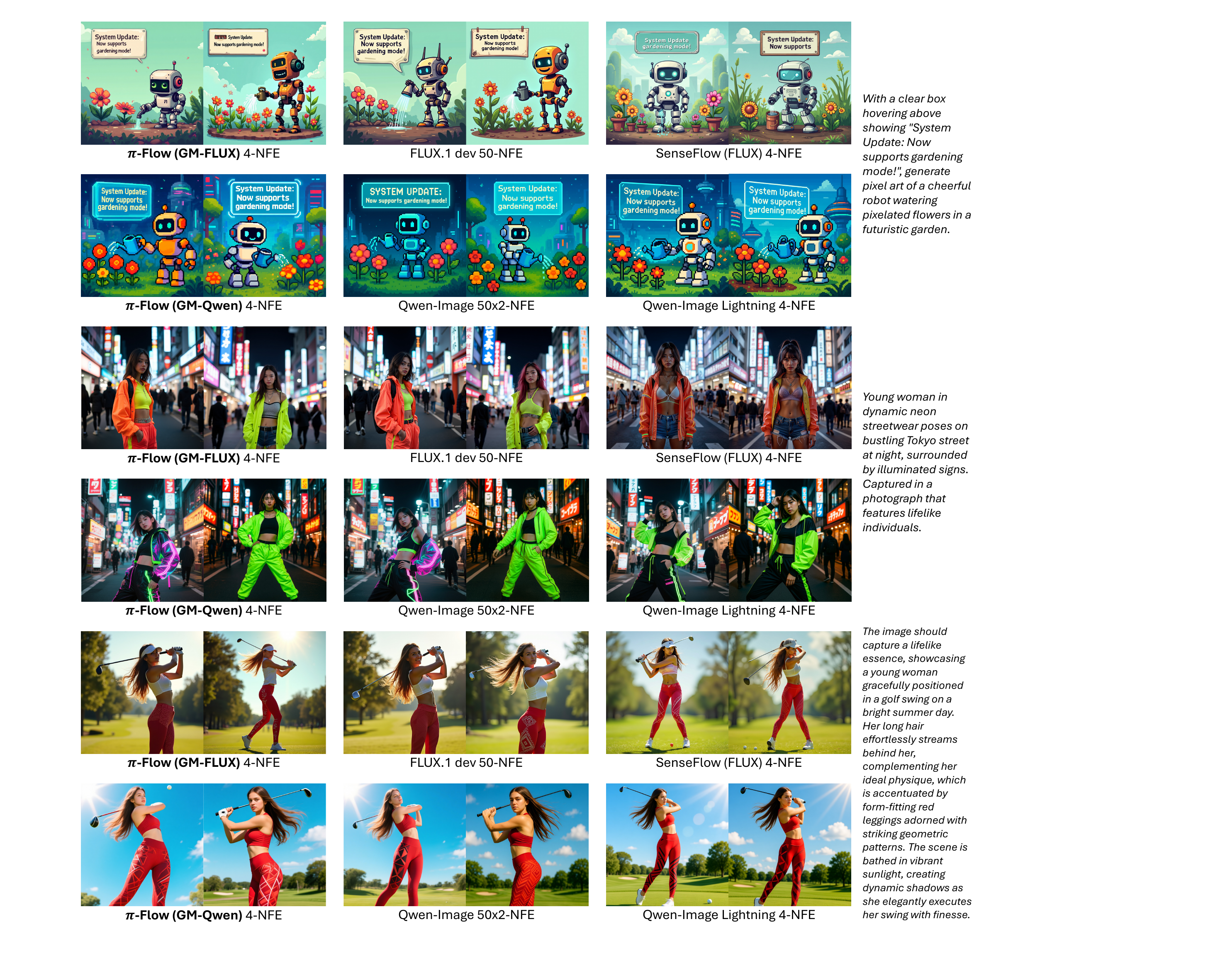}
    \caption{Images generated from the same batch of initial noise by \methodname s, teachers, and VSD students (SenseFlow, Qwen-Image Lightning). \methodname{} models produce diverse structures that closely mirror the teacher's. In contrast, VSD students tend to repeat the same structure. Notably, SenseFlow mostly generates symmetric images. 
    }
    \label{fig:compare_teacher_vsd}
\end{figure}

\textbf{Competitor models.} 
We compare \methodname{} against other few-step student models distilled from the same teacher. For FLUX, we compare against:
4-NFE SenseFlow~\citep{senseflow}, primarily leveraging variational score distillation (VSD)~\citep{wang2023prolificdreamer}, also known as distribution matching distillation (DMD)~\citep{yin2024onestep}; 
8-NFE Hyper-FLUX~\citep{hypersd}, trained with consistency distillation (CD)~\citep{consistencymodels} and reward models (Re)~\citep{imagereward}; 
8-NFE FLUX Turbo, based on GAN-like adversarial distillation~\citep{GAN, sauer2025adversarial}. For Qwen-Image, we compare with the 4-NFE Qwen-Image Lighting
based on VSD~\citep{qwen_image_lightning_2025}.
Note that the 4-NFE FLUX.1 schnell is distilled from the closed-source FLUX.1 pro instead of the publicly available FLUX.1 dev~\citep{flux1_schnell_space_2024}, so we do not compare with it directly, but include further discussion in \S~\ref{sec:schnell}.

\textbf{Strong all-around performance.}
As shown in Table~\ref{tab:fluxqwen} and Table~\ref{tab:oneig}, \methodname{} demonstrates strong all-around performance, outperforming other few-step students on roughly 70\% of all metrics, without exhibiting obvious weaknesses in any specific area. 

\textbf{Superior diversity and teacher alignment.}
\methodname{} consistently achieves the highest diversity scores and the best teacher-referenced FIDs by clear margins, especially in the 4-NFE setting. 
These results strongly suggest that \methodname{} effectively avoids both diversity collapse and style drift. As a result, most of its scores closely match those of the teacher, with some even slightly surpassing the teacher scores (e.g., prompt alignment and several Qwen-Image OneIG scores). Its strong teacher alignment is also evident in Fig.~\ref{fig:compare_teacher_vsd}, where \methodname{} generates structurally similar images to the teacher's from the same initial noise.

\textbf{Comparison with VSD (DMD) students.} 
VSD models are notable for high visual quality, sometimes surpassing teachers in quality and preference metrics. However, they are widely known to suffer from mode collapse, as reflected in our experiments: both SenseFlow and Qwen-Image Lightning show significant drops in diversity and FIDs. Visual examples in Fig.~\ref{fig:compare_teacher_vsd} further highlight the collapse, where different initial noises produce visually similar 
images with only minor variations. 
In contrast, \methodname{} maintains high quality and diversity without sacrificing either aspect.

\textbf{Comparison with other students.}
FLUX Turbo achieves better data alignment FIDs than the teacher due to GAN training, yet its text rendering performance is significantly weaker, as shown in Fig.~\ref{fig:text_viz}.
Meanwhile, Hyper-FLUX often produces undesirable texture artifacts and fuzzy details, whereas \methodname{} achieves superior detail rendering, as shown in Fig.~\ref{fig:detail_viz}.

\textbf{Data-dependent vs. data-free.}
As shown in Table~\ref{tab:fluxqwen} and Table~\ref{tab:oneig}, data-dependent and data-free \methodname{} models achieve nearly identical results. This demonstrates the practicality of \methodname{} in scenarios where high-quality data is unavailable.

\textbf{GMFlow vs. DX policy.}
Consistent with prior ImageNet findings, the DX policy slightly underperforms comapred to the GMFlow policy (Table~\ref{tab:dxfluxqwen}), highlighting the latter's superior robustness.

\begin{figure}[t]
    \centering
    \includegraphics[width=0.99\linewidth]{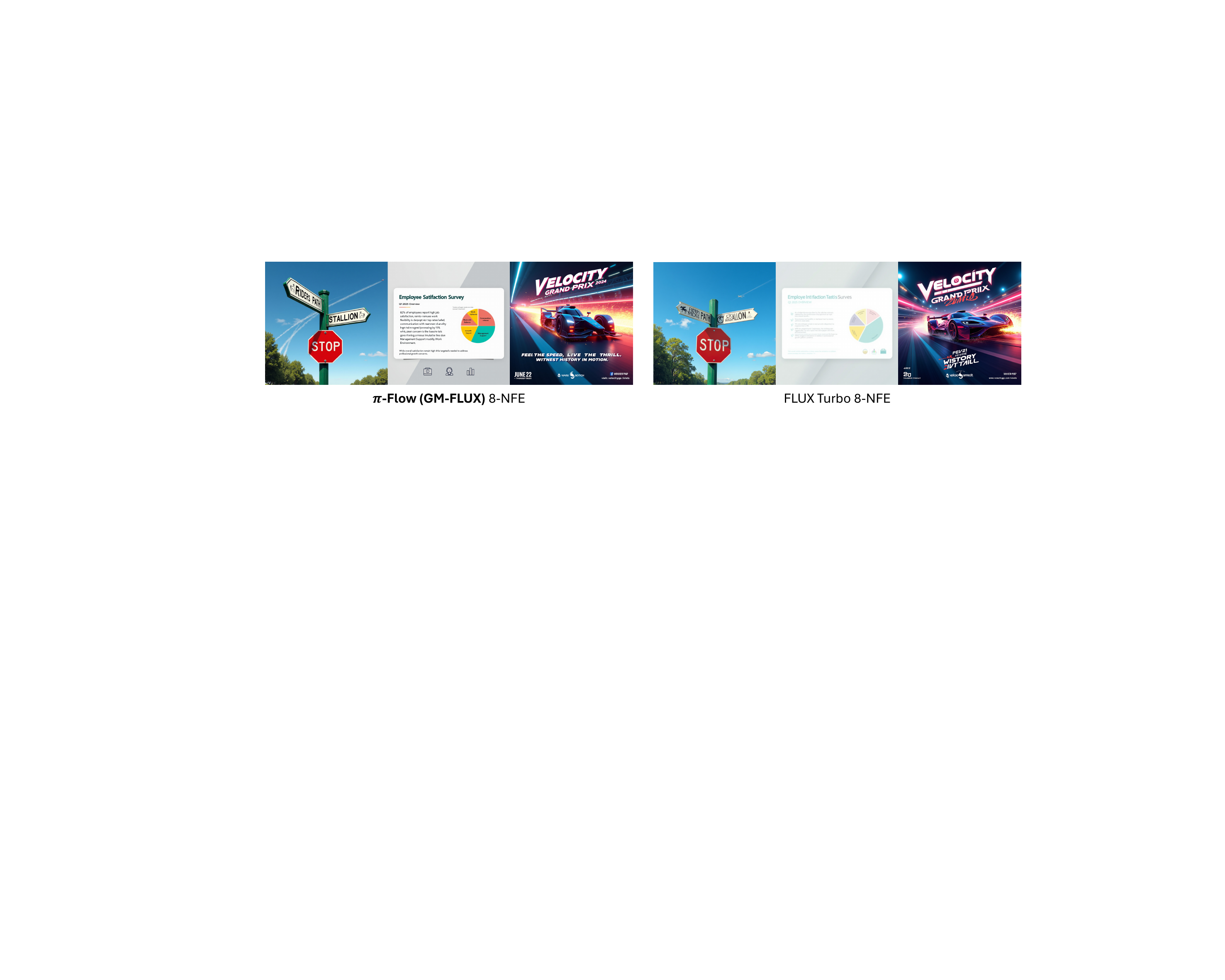}
    \caption{Images generated from the same initial noise by \methodname{} and FLUX Turbo. \methodname{} renders coherent texts, whereas FLUX Turbo underperforms in text rendering.
    }
    \label{fig:text_viz}
\end{figure}

\begin{figure}[t]
    \centering
    \includegraphics[width=1.0\linewidth]{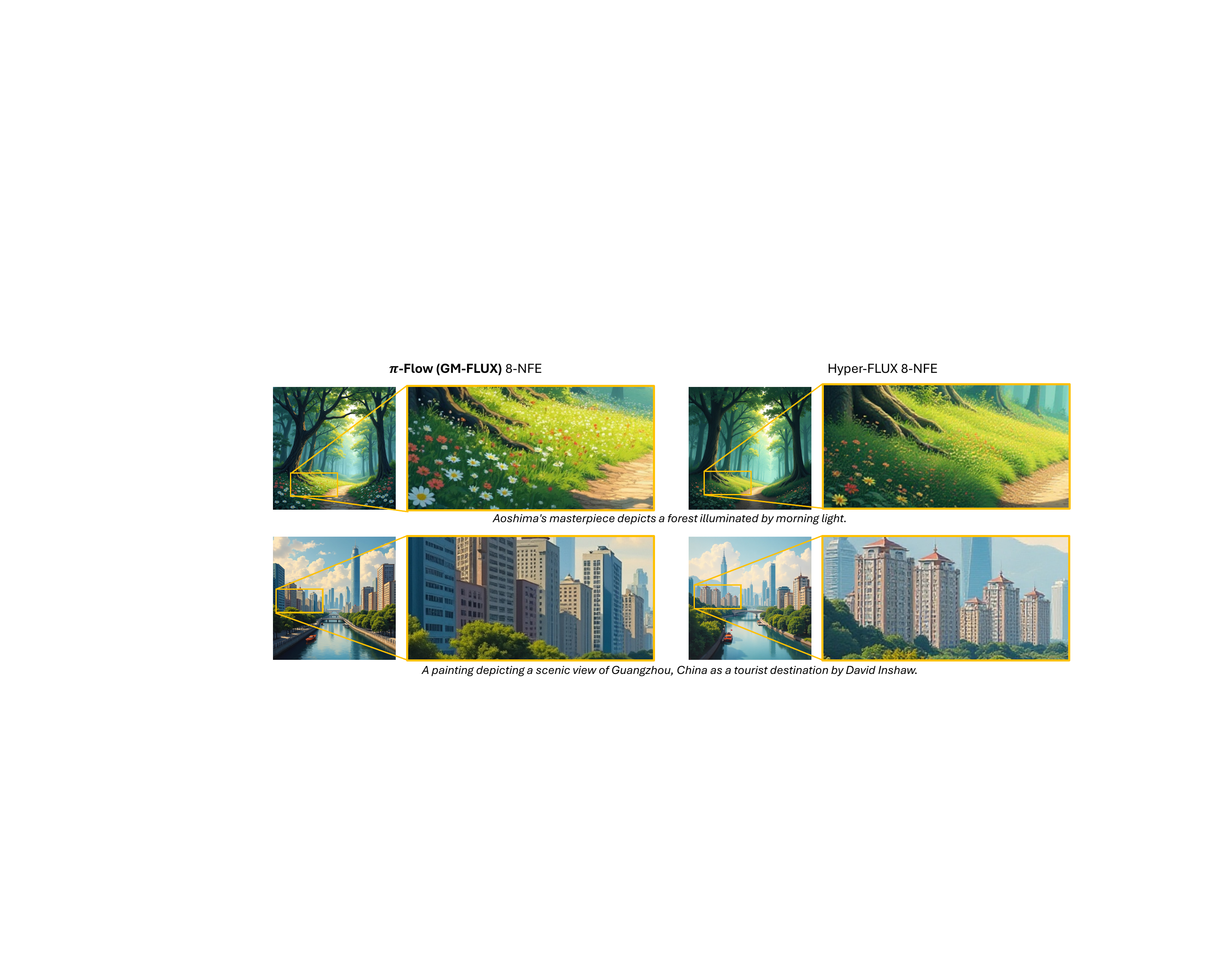}
    \caption{Images generated from the same initial noise by \methodname{} and Hyper-FLUX. \methodname{} produces notably finer details, as highlighted in the zoomed-in patches.
    }
    \label{fig:detail_viz}
\end{figure}

\begin{table}[t]
\centering
\begin{minipage}[t]{0.63\linewidth}
    % \vspace{-0.8ex} % force top alignment for [t]
    \caption{Comparisons between DX and GMFlow policies on text-to-image generation.}
    % \vspace{0.8ex}
    \label{tab:dxfluxqwen}
    \centering
    \scalebox{0.8}{%
        \setlength{\tabcolsep}{0.3em}
        \begin{tabular}{llccccc}
        \toprule
        \multirow{2}[2]{*}{\textbf{Policy}} & \multirow{2}[2]{*}{\textbf{Teacher}} & \multicolumn{3}{c}{HPSv2 prompts} & \multicolumn{2}{c}{OneIG-Bench} \\
        \cmidrule(lr){3-5} \cmidrule(lr){6-7}
        & & \textbf{FID\textdownarrow} & \textbf{pFID\textdownarrow} & \textbf{HPSv2.1\textuparrow} & \textbf{Text\textuparrow} & \textbf{Diversity\textuparrow}
        \\
        \midrule
        DX ($N=10$) & FLUX & 14.9 & 20.9 & \textbf{0.313} & 0.397 & 0.225 \\
        GM ($K=8$) & FLUX & \textbf{14.3} & \textbf{19.2} & \textbf{0.313} & \textbf{0.437} & \textbf{0.229} \\
        \midrule
        DX ($N=10$) & Qwen-Image & \textbf{12.7} & 17.0 & 0.306 & 0.869 & \textbf{0.185} \\
        GM ($K=8$) & Qwen-Image & 12.8 & \textbf{16.6} & \textbf{0.310} & \textbf{0.892} & 0.180 \\
        \bottomrule
        \end{tabular}
    }

    \vspace{1.5em}
    
    \centering
    \caption{\edit{Per-NFE inference time of \methodname{} models and the shortcut-predicting model (Qwen-Image Lightning). Tested on an A100 GPU with 12 CPU cores (3.0 GHz).}}
    \edit{\label{tab:time}
    \scalebox{0.8}{%
        \begin{tabular}{lcc}
        \toprule
        \textbf{Model} & \textbf{Network time (sec)} & \textbf{Policy time (sec)} \\
        \midrule
        Qwen-Image Lightning & 0.465 & - \\
        \methodname{} (DX-Qwen) & 0.464 & 0.015 \\
        \methodname{} (GM-Qwen) & 0.465 & 0.014 \\
        \bottomrule
        \end{tabular}
    }}

\end{minipage} \hfill 
\begin{minipage}[t]{0.33\linewidth}
    \vspace{-0.8ex} % force top alignment for [t]
    \centering
    \hspace{-3ex}\includegraphics[width=0.9\linewidth]{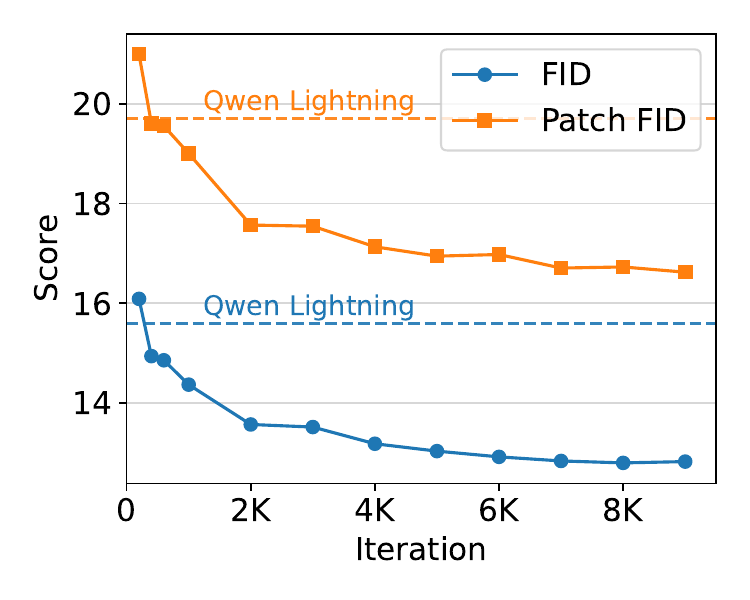}
    \captionof{figure}{Teacher-referenced FID and Patch FID of GM-Qwen evaluated on HPSv2 prompts.}
    \label{fig:converge}
\end{minipage}
% \vspace{-0.4em}
\end{table}

\textbf{Convergence.}
Figure~\ref{fig:converge} illustrates the convergence of \methodname{} (GM-Qwen) over training iterations. Both FID and Patch FID scores initially improve rapidly, outperforming Qwen-Image Lightning within the first 400 iterations, and continue to improve steadily thereafter. This contrasts with previous
GAN or VSD-based methods that often require frequent checkpointing and cherry-picking~\citep{senseflow}, demonstrating the scalability and robustness of our approach.

\textbf{\edit{Inference time.}}
\edit{To validate that the policies are indeed fast enough so that the overhead is negligible compared to shortcut-predicting models, we compare the inference times of 4-NFE \methodname{} models and Qwen-Image Lightning in Table~\ref{tab:time}. For \methodname{}, each policy generation step (with one network evaluation) is followed by 32 policy integration substeps on average. The results in Table~\ref{tab:time} show that 32 policy substeps cost around 15 ms in total, which is only 3\% of the network time. Therefore, the overall speed of \methodname{} is on par with shortcut-predicting models.}

\section{Related Work}
Prior work on diffusion model distillation primarily focuses on predicting shortcuts towards less noisy states, with training objectives ranging from direct regression to distribution matching.

Early work~\citep{luhman2021knowledgedistillationiterativegenerative} directly regresses the teacher's ODE integral in a single step, but suffers from degraded quality, since regressing $\vx_0$ with an $\ell_2$ loss tends to produce blurry results. Progressive distillation methods~\citep{salimans2022progressive,liu2022flow,instaflow,frans2025one} make further improvements via a multi-stage process that progressively increases the student's step size and reduces its NFE by regressing the previous stage's multi-step outputs with fewer steps, yet this introduces error accumulation.

Consistency-based models~\citep{consistencymodels,boot,kim2024consistency,song2024improved,meanflow,boffi2025flow} implicitly impose a velocity-based regression loss, which improves quality compared to $\vx$-based regression. However, the flow velocity of a shortcut-predicting student must be constructed implicitly using either inaccurate finite differences or expensive Jacobian--vector products (JVPs). Moreover, their quality is still limited due to accumulation of velocity errors into the integrated shortcut. Therefore, in practice, consistency distillation is often augmented with additional objectives to improve quality~\citep{hypersd,zheng2025largescalediffusiondistillation}, further complicating training.

Conversely, distribution matching approaches~\citep{yin2024onestep,yin2024improved,sauer2025adversarial,sid,sim,salimans2024multistep,imm} adopt score matching and adversarial training to align the student's output distribution with the teacher's. The VSD objective achieves superior quality but risks diversity loss due to mode collapse; GAN and SiD objectives balance quality and diversity but can cause style drift. Their common reliance on auxiliary networks introduces additional tuning complexity and may lead to stability issues at scale~\citep{senseflow}.
\section{Conclusion}
We introduced policy-based flow models (\methodname), a novel framework for few-step generation in which the network outputs a fast policy that enables accurate ODE integration via dense substeps reach the denoised state. To distill \methodname{} models, we proposed a simple on-policy imitation learning approach that reduces the training objective to a single $\ell_2$ loss, mitigating error accumulation and quality--diversity trade-offs. Extensive experiments distilling ImageNet DiT, FLUX.1-12B, and Qwen-Image-20B models show that few-step \methodname s consistently attain teacher-level image quality while significantly outperforming competitors in diversity and teacher alignment. \methodname{} offers a scalable, principled paradigm for efficient, high-quality generation and opens new directions for future research, such as exploring more robust policy families, improved distillation objectives, and extensions to other applications (e.g., video generation).

\textbf{Reproducibility statement.} 
To facilitate reproduction, we describe the detailed training procedures in Algorithms~\ref{alg:train_datadepend} and~\ref{alg:train_datafree}, and list all important hyperparameters in \S\ref{sec:hyperparam}.

\ificlrfinal

\textbf{Acknowledgements.} This project was partially done while Hansheng Chen was supported by the Qualcomm Innovation Fellowship and partially done while Hansheng Chen was an intern at Adobe Research. We would like to thank Jianming Zhang and Hailin Jin for their great support throughout the internship, and Xingtong Ge for the help in evaluating SenseFlow.

\fi

% \subsubsection*{Acknowledgments}
% Use unnumbered third level headings for the acknowledgments. All
% acknowledgments, including those to funding agencies, go at the end of the paper.

\bibliography{iclr2026_conference}
\bibliographystyle{iclr2026_conference}

\newpage
\appendix

\begin{algorithm}[t]
\small
\DontPrintSemicolon
\KwIn{$\mathit{NFE}$, teacher $G_\vtheta$, data--condition distribution $p(\vx_0, \vc)$, shift $m$}
\KwOut{Student $G_\vphi$}
Initialize student params $\vphi$ \\
$S \gets \Set{\frac{1}{\mathit{NFE}},\frac{2}{\mathit{NFE}},\cdots,1}$  \tcp*{can be adjusted to reduce final step size}
\For{finite samples $\vx_0, \vc \sim p(\vx_0, \vc)$, $\bm{\epsilon} \sim \mathcal{N}(\bm{0}, \mI)$, $\tau^\prime \sim U(0, 1)$}{
    $\tau_\text{src} \gets \min \Set{\tau_\text{src} | \tau_\text{src} \in S\ \text{and}\ \tau_\text{src}\ge \tau^\prime}$ \\
    $\tau_\text{dst} \gets \max \Set{\tau_\text{dst} | \tau_\text{dst} \in S \cup \Set{0}\ \text{and}\ \tau_\text{dst} < \tau_\text{src}}$ \\
    $t_\text{src} \gets \frac{m \tau_\text{src}}{1 + (m - 1) \tau_\text{src}}$ \tcp*{time shifting~\citep{sd3}}
    $\vx_{t_\text{src}} \gets \alpha_{t_\text{src}} \vx_0 + \sigma_{t_\text{src}} \bm{\epsilon} $ \\
    $\pi \gets G_\vphi(\vx_{t_\text{src}}, t_\text{src},\vc)$ \\
    $\pi_\text{D} \gets \operatorname{stopgrad}(\pi)$~~or~~$\pi_\text{D} \gets \operatorname{dropout}(\operatorname{stopgrad}(\pi))$ \\
    $\mathcal{L}_\vphi \gets 0$ \\
    \For{finite samples $\tau \sim U(\tau_\textnormal{dst}, \tau_\textnormal{src})$}{
        $t \gets \frac{m \tau}{1 + (m - 1) \tau}$ \\
        $\vx_t \gets \vx_{t_\text{src}} + \int_{t_\text{src}}^t \pi_\text{D}(\vx_t,t) \diff t$ \\
        $\mathcal{L}_\vphi \gets \mathcal{L}_\vphi + \frac{1}{2} \left\|G_\vtheta(\vx_t, t, \vc) -\pi(\vx_t, t)\right\|^2$ \tcp*{can be replaced with Eq.~(\ref{eq:microwindow})}
    }
    $\vphi \gets \operatorname{Adam}\left(\vphi, \nabla_\vphi \mathcal{L}_\vphi \right)$ \tcp*{optimizer step}
}
\caption{Data-dependent on-policy $\pi$-ID training loop with time shifting.}
\label{alg:train_datadepend}
\end{algorithm}

\begin{algorithm}[t]
\small
\DontPrintSemicolon
\KwIn{$\mathit{NFE}$, teacher $G_\vtheta$, condition distribution $p(\vc)$, shift $m$}
\KwOut{Student $G_\vphi$}
Initialize student params $\vphi$ \\
\For{finite samples $\vc \sim p(\vc)$, $\vx_1 \sim \mathcal{N}(\bm{0}, \mI)$}{
    $\tau_\text{src} \gets 1,\quad t_\text{src} \gets 1$ \\
    $\mathcal{L}_\vphi \gets 0$ \\
    \While{$\tau_\textnormal{src} > 0$}{
        $\tau_\text{dst} \gets \tau_\text{src} - \frac{1}{\mathit{NFE}}$ \tcp*{can be adjusted to reduce final step size}
        $t_\text{dst} \gets \frac{m \tau_\text{dst}}{1 + (m - 1) \tau_\text{dst}}$ \tcp*{time shifting~\citep{sd3}}
        $\pi \gets G_\vphi(\vx_{t_\text{src}}, t_\text{src},\vc)$ \\
        $\pi_\text{D} \gets \operatorname{stopgrad}(\pi)$~~or~~$\pi_\text{D} \gets \operatorname{dropout}(\operatorname{stopgrad}(\pi))$ \\
        \For{finite samples $\tau \sim U(\tau_\textnormal{dst}, \tau_\textnormal{src})$}{
            $t \gets \frac{m \tau}{1 + (m - 1) \tau}$ \\
            $\vx_t \gets \vx_{t_\text{src}} + \int_{t_\text{src}}^t \pi_\text{D}(\vx_t,t) \diff t$ \\
            $\mathcal{L}_\vphi \gets \mathcal{L}_\vphi + \frac{\tau_\text{src} - \tau_\text{dst}}{2} \left\|G_\vtheta(\vx_t, t, \vc) - \pi(\vx_t, t) \right\|^2$  \tcp*{can be replaced with Eq.~(\ref{eq:microwindow})}
        }
        $\vx_{t_\text{dst}} \gets \vx_{t_\text{src}} + \int_{t_\text{src}}^{t_\text{dst}} \pi_\text{D}(\vx_t,t) \diff t$ \\
        $\tau_\text{src} \gets \tau_\text{dst},\quad t_\text{src} \gets t_\text{dst}$ \\
    }
    $\vphi \gets \operatorname{Adam}\left(\vphi, \nabla_\vphi \mathcal{L}_\vphi \right)$ \tcp*{optimizer step}
}
\caption{Data-free on-policy $\pi$-ID training loop with time shifting.}
\label{alg:train_datafree}
\end{algorithm}

\section{Use of Large Language Models}

In preparing this manuscript, we used large language models (LLMs) as general-purpose writing assistants for grammar corrections, rephrasing, and clarity/concision edits. All LLM-suggested edits were reviewed and verified by the authors, who take full responsibility for the final manuscript.

\section{Additional Technical Details}

\subsection{GM Temperature}
\label{sec:gm_temp}
Inspired by the temperature parameter in language models, we introduce a similar temperature parameter for the GMFlow policy during inference. Let $T>0$ be the temperature parameter. Given a $C$-dimensional GM velocity distribution $q(\vu | \vx_{t_\text{src}}) = \sum_{k=1}^K A_{k} \mathcal{N}\left(\vu; \vmu_{k}, s^2\mI\right)$, the new GM probability with temperature $T$ is defined as:
\begin{align}
    q_T(\vu | \vx_{t_\text{src}}) \coloneqq \frac{q^\frac{1}{T}(\vu | \vx_{t_\text{src}})}{\int_{\R^C} q^\frac{1}{T}(\vu | \vx_{t_\text{src}}) \diff \vu}.
\end{align}
Although $q_T(\vu | \vx_{t_\text{src}})$ does not have a general closed-form expression, it can be approximated by the following expression, which works very well as a practical implementation:
\begin{align}
    q_T(\vu | \vx_{t_\text{src}}) \approx \sum_{k=1}^K \frac{A_{k}^\frac{1}{T}}{\sum_{z=1}^{K} A_{z}^\frac{1}{T}} \mathcal{N}\left(\vu; \vmu_{k}, s^2 T \mI\right).
\end{align}

For the distilled FLUX and Qwen-Image models, we set $T=0.3$ for 4-NFE generation and $T=0.7$ for 8-NFE generation. An exception is that we do not apply temperature scaling to the final step, as we found this can impair texture details. As shown in Table~\ref{tab:fluxabl}, ablating GM temperature from the 4-NFE GM-FLUX leads to degraded teacher alignment.

\subsection{Scheduled Trajectory Mixing For Guidance-Distilled Teachers}
\label{sec:scheduled}

To reduce out-of-distribution exposure in imitation learning, scheduled sampling~\citep{scheduledsampling} stochastically alternates between expert (teacher) and learner policy during trajectory integration, decaying the expert probability from 1 to 0. However, naively applying it to $\pi$-ID is impractical because the teacher flow model $G_\vtheta$ is much slower than the network-free policy $\pi_{\text{D}}$.

To maintain constant compute throughout training, we introduce a scheduled trajectory mixing strategy. Since the teacher is slow, we fix the total number of teacher queries, allow each query to cover a coarse, longer step initially, and gradually shrink the teacher step size while filling the gaps with the fast policy $\pi_{\text{D}}$. As shown in Fig.~\ref{fig:scheduledmix}~(a), training initially adopts a fully off-policy teacher trajectory (behavior cloning). At the beginning time $t_a$ of each teacher step, we roll in the learner policy $\pi$, integrate it over the same interval from $t_a$ to $t_b$, and match its average velocity to the teacher velocity with the $\ell_2$ loss:
\begin{align}
    \mathcal{L}_\vphi = \E\left[\frac{1}{2}\left\| 
        G_\vtheta(\vx_{t_a},t_a) - \frac{1}{t_b - t_a}\int_{t_a}^{t_b}\pi(\vx_t,t) \diff t
    \right\|^2\right].
\end{align}
As training progresses (Fig.~\ref{fig:scheduledmix}~(b)), we then mix teacher and detached-policy segments while using the same loss, and linearly decay the teacher ratio---the sum of teacher step lengths divided by the total interval length $t_\text{src} - t_\text{dst}$. 
Finally, when the teacher ratio reaches 0, training reduces to on-policy $\pi$-ID. All teacher step boundaries (starts and ends) are randomly sampled within the interval $[t_\text{dst}, t_\text{src}]$ under the teacher ratio constraint, so that step sizes and locations vary while the total teacher-covered length follows the current ratio schedule.

We apply scheduled trajectory mixing exclusively when distilling the FLUX.1 dev model, as it lacks real CFG. Since omitting CFG doubles the teacher's speed, we increase the number of intermediate samples (teacher steps) to 4 accordingly.

\begin{figure}
    \centering
    \includegraphics[width=1.0\linewidth]{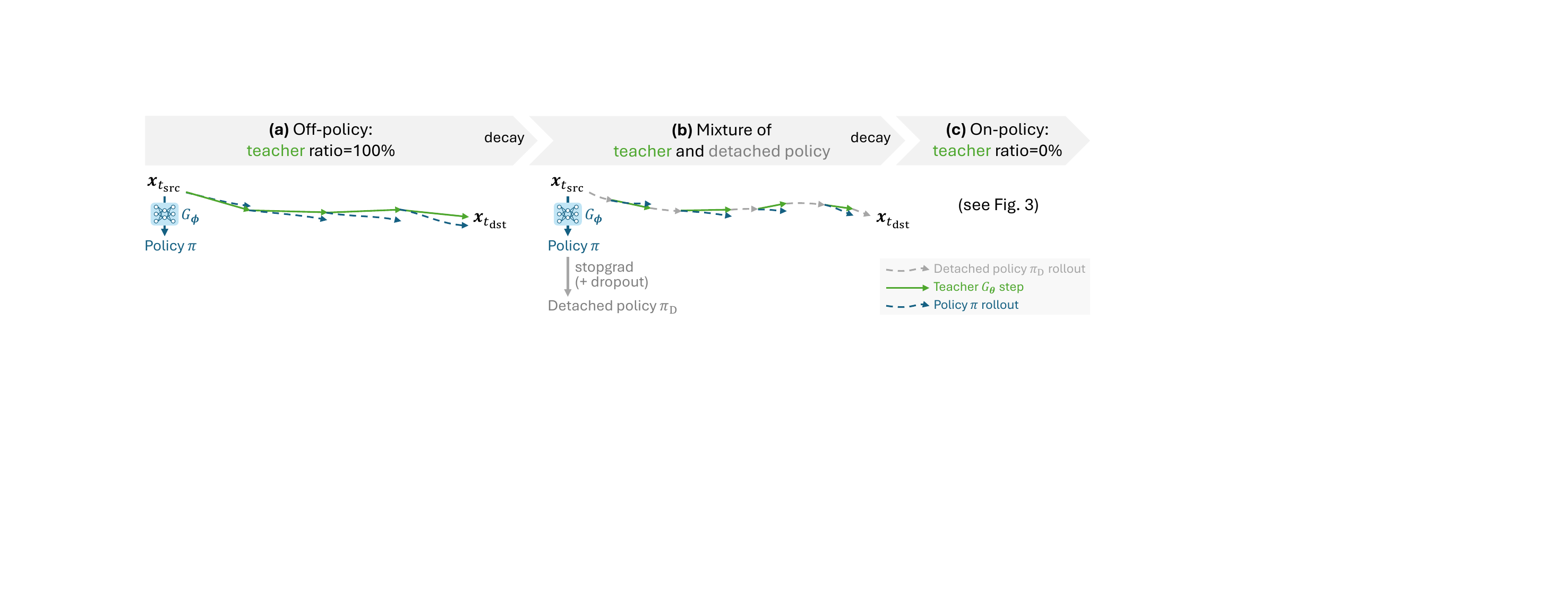}
    \caption{Three stages of scheduled trajectory mixing. (a) Off-policy behavior cloning with a teacher ratio of 1. (b) Mixed teacher and detached-policy segments with a decaying teacher ratio. (c) On-policy imitation learning with a teacher ratio of 0 (Fig.~\ref{fig:imitation}).}
    \label{fig:scheduledmix}
\end{figure}

\subsection{Micro-Window Velocity Matching}

For on-policy $\pi$-ID, in practice we found that replacing the instantaneous velocity matching loss in Algorithm~\ref{alg:train_simple} with a modified average velocity loss over a micro time window generally benefits training. The modified loss is defined as:
\begin{align}
    \negthickspace \mathcal{L}_\vphi = \E\left[
        \frac{1}{2} \left\|
            G_\vtheta(\vx_t, t) - \frac{1}{-\Delta t} \int_t^{t-\Delta t} \pi(\vx_t,t) \diff t 
        \right\|^2
    \right],
    \label{eq:microwindow}
\end{align}
where $\Delta t$ is the window size. We set $\Delta t = 3/128$ (three policy integration steps) for all FLUX.1 and Qwen-Image experiments.

\begin{table}[t]
    \caption{Ablation study on 4-NFE \methodname{} (GM-FLUX), evaluated on the HPSv2 prompt set using teacher-referenced FID metrics (reflecting teacher alignment).}
    % \vspace{0.8ex}
    \label{tab:fluxabl}
    \centering
    \scalebox{0.9}{%
        \setlength{\tabcolsep}{0.3em}
        \begin{tabular}{cccc}
        \toprule
        \makecell{\footnotesize\textbf{GM} \\ \footnotesize\textbf{temperature}} & \makecell{\textbf{Micro} \\ \textbf{window}} & \textbf{FID\textdownarrow} & \textbf{pFID\textdownarrow}
        \\
        \midrule
        \checkmark & \checkmark & \textbf{14.3} & \textbf{19.2} \\
        & \checkmark & 14.9 & 20.1 \\
        \checkmark & & 14.6 & 20.3 \\ 
        \bottomrule
        \end{tabular}
    }
\end{table}

\begin{figure}[t]
\vspace{0.6ex}
    \centering
    \includegraphics[width=0.99\linewidth]{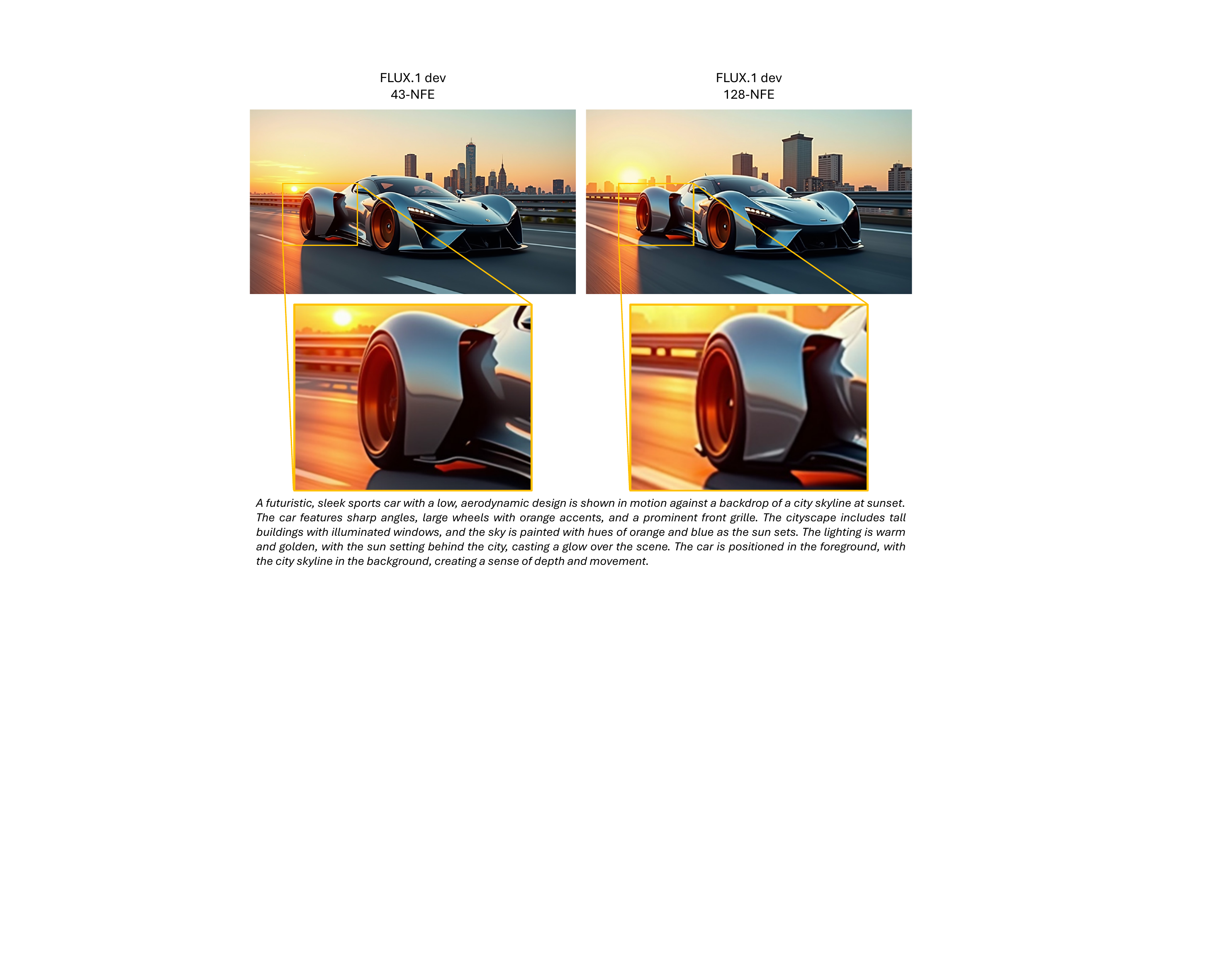}
    \caption{The 128-NFE FLUX.1 dev often generates blurry images, whereas the 43-NFE FLUX.1 dev reduces the blur and produces sharper edges.}
    \label{fig:viz_dev}
\end{figure}

The benefits of micro-window velocity matching are threefold:
\begin{itemize}
\item It generally smooths the training signal, reducing sensitivity to sharp local variations in the teacher trajectory.
\item It stabilizes the less robust DX policy. In the ImageNet experiments, we observe that training with the DX policy diverges without this modification.
\item With $\Delta t = 3/128$, the policy effectively mimics teacher sampling with $\frac{128}{3}\approx$ 43 steps instead of 128 steps. For the guidance-distilled FLUX.1 dev model, we observe that the teacher often generates blurry images using 128-step sampling, while 43-step sampling yields sharper results (see Fig.~\ref{fig:viz_dev}). This behavior is inherited by the student, so micro-window velocity matching helps reduce blur.
\end{itemize}

As shown in Table~\ref{tab:fluxabl}, ablating the micro window trick from the 4-NFE GM-FLUX leads to degraded teacher alignment.

\subsection{Time Sampling}
For high resolution image generation, \citet{sd3} proposed a time shifting mechanism to rescale the noise strength. Let $\tau$ be the pre-shift raw time and $m$ be the shift hyperparameter, the shifted time is defined as $t \coloneqq \frac{m \tau}{1 + (m - 1) \tau}$.

\begin{table}[t]
    \centering
    \caption{Hyperparameters used in the ImageNet experiments.}
    \label{tab:params}
    \scalebox{0.8}{%
        \renewcommand{\arraystretch}{1.4}
        \begin{tabular}{lccccccc}
        \toprule
        & \multicolumn{6}{c}{1-NFE} & 2-NFE \\
        \cmidrule(lr){2-7} \cmidrule(lr){8-8}
        & \makecell[c]{GM-FM \\ ($K=32$)} & \makecell[c]{GM-REPA \\ ($K=8$)} & \makecell[c]{GM-REPA \\ ($K=32$)} & \makecell[c]{DX-REPA \\ ($N=10$)} & \makecell[c]{DX-REPA \\ ($N=20$)} & \makecell[c]{DX-REPA \\ ($N=40$)} & \makecell[c]{GM-REPA \\ ($K=32$)} \\
        \midrule
        GM dropout 
            & 0.05 & 0.05 & 0.05 & - & - & - & 0.05 \\
        \# of intermediate states 
            & 2 & 2 & 2 & 2 & 2 & 2 & 2 \\
        Window size (raw) $\Delta \tau$ 
            & - & - & - & $10 / 128$ & $5 / 128$ & $3 / 128$ & - \\
        Shift $m$ & 1.0 & 1.0 & 1.0 & 1.0 & 1.0 & 1.0 & 1.0 \\ \addlinespace[0.6ex]
        Teacher CFG 
            & 2.7 & 3.2 & 3.2 & 3.2 & 3.2 & 3.2 & 2.8 \\
        Teacher CFG interval & $t\in[0, 0.6]$ & $t\in[0, 0.7]$ & $t\in[0, 0.7]$ & $t\in[0, 0.7]$ & $t\in[0, 0.7]$ & $t\in[0, 0.7]$ & $t\in[0, 0.7]$\\
        Learning rate 
            & 5e-5 & 5e-5 & 5e-5 & 5e-5 & 5e-5 & 5e-5 & 5e-5\\ 
        Batch size 
            & 4096 & 4096 & 4096 & 4096 & 4096 & 4096 & 4096 \\ \addlinespace[0.6ex]
        \makecell[l]{\# of training iterations \\ in Table~\ref{tab:imagenet_sota}} & 140K & - & 140K & - & - & - & 24K \\ \addlinespace[0.6ex]
        \makecell[l]{EMA param $\gamma$ in \\ \citet{karras2024analyzingimprovingtrainingdynamics}} & 7.0 & 7.0 & 7.0 & 7.0 & 7.0 & 7.0 & 7.0 \\
        \bottomrule
        \end{tabular}
    }
\end{table}

\begin{table}[t]
    \centering
    \caption{Hyperparameters used in FLUX and Qwen-Image experiments.}
    \label{tab:params2}
    \scalebox{0.8}{%
        \renewcommand{\arraystretch}{1.4}
        \begin{tabular}{lccccc}
        \toprule
        & \multicolumn{4}{c}{4-NFE} & 8-NFE \\
        \cmidrule(lr){2-5} \cmidrule(lr){6-6}
        & \makecell[c]{GM-FLUX \\ ($K=8$)} & \makecell[c]{GM-Qwen \\ ($K=8$)} & \makecell[c]{DX-FLUX \\ ($N=10$)} & \makecell[c]{DX-Qwen \\ ($N=10$)} & \makecell[c]{GM-FLUX \\ ($K=8$)} \\
        \midrule
        GM dropout 
            & 0.1 & 0.1 & - & - & 0.1 \\
        GM temperature $T$
            & 0.3 & 0.3 & - & - & 0.7 \\
        \# of intermediate states 
            & 4 & 2 & 4 & 2 & 4 \\
        Window size (raw) $\Delta \tau$ 
            & $3 / 128$ & $3 / 128$ & $3 / 128$ & $3 / 128$ & $3 / 128$\\
        Shift $m$ 
            & 3.2 & 3.2 & 3.2 & 3.2 & 3.2 \\
        Final step size scale 
            & 0.5 & 0.5 & 0.5 & 0.5 & 0.5 \\        
        Teacher CFG 
            & 3.5 & 4.0 & 3.5 & 4.0 & 3.5 \\
        Learning rate 
            & 1e-4 & 1e-4 & 1e-4 & 1e-4 & 1e-4 \\ 
        Batch size 
            & 256 & 256 & 256 & 256 & 256 \\
        \# of training iterations & 3K & 9K & 3K & 9K & 3K \\
        \# of decay iterations (\S~\ref{sec:scheduled}) & 2K & - & 2K & - & 2K \\ \addlinespace[0.6ex]
        \makecell[l]{EMA param $\gamma$ in \\ \citet{karras2024analyzingimprovingtrainingdynamics}} & 7.0 & 7.0 & 7.0 & 7.0 & 7.0 \\
        \bottomrule
        \end{tabular}
    }
\end{table}

Following this idea, $\pi$-ID samples times uniformly in raw-time space and then applies the shift to remap those samples. Detailed time sampling routines are given in Algorithms~\ref{alg:train_datadepend} and \ref{alg:train_datafree}.

For FLUX.1 and Qwen-Image, we use a fixed shift $m=3.2$, which is a rounded approximation of FLUX.1's official dynamic shift at 1MP resolution. In addition, several diffusion/flow models reduce the noise strength at the final step to improve detail~\citep{Karras2022edm,qwen}. Accordingly, for FLUX.1 and Qwen-Image we halve the final step size (relative to previous steps) in raw-time space.

\section{Additional Implementation Details and Hyperparameters}
\label{sec:hyperparam}

All models are trained with BF16 mixed precision, using the 8-bit Adam optimizer~\citep{adam,dettmers20228bit} without weight decay. For inference, we use EMA weights with a dynamic moment schedule~\citep{karras2024analyzingimprovingtrainingdynamics}. Detailed hyperparameter choices are listed in Table~\ref{tab:params} and \ref{tab:params2}.

\begin{table}[t]
    \centering
    \caption{FLUX.1 schnell evaluation results on COCO-10k dataset and HPSv2 prompt set.}
    \label{tab:schnell}
    \scalebox{0.8}{%
        \setlength{\tabcolsep}{0.35em}
        \begin{tabular}{lcccccccccc}
        \toprule
        \multirow{3}[3]{*}{\textbf{Model}} & \multirow{3}[3]{*}{\textbf{Distill method}} & \multirow{3}[3]{*}{\textbf{NFE}} & \multicolumn{5}{c}{COCO-10k prompts} & \multicolumn{3}{c}{HPSv2 prompts} \\
        \cmidrule(lr){4-8} \cmidrule(lr){9-11} 
        & & & \multicolumn{2}{c}{Data align.} & \multicolumn{2}{c}{Prompt align.} & Pref. align. & \multicolumn{2}{c}{Prompt align.} & Pref. align. \\
        \cmidrule(lr){4-5} \cmidrule(lr){6-7} \cmidrule(lr){8-8} \cmidrule(lr){9-10} \cmidrule(lr){11-11}
        & & & \textbf{FID\textdownarrow} & \textbf{pFID\textdownarrow} & \textbf{CLIP\textuparrow} & \textbf{VQA\textuparrow} & \textbf{HPSv2.1\textuparrow} & \textbf{CLIP\textuparrow} & \textbf{VQA\textuparrow} & \textbf{HPSv2.1\textuparrow} \\
        \midrule
        FLUX.1 schnell
            & GAN & 4 & 21.8 & 29.1 & 0.274 & 0.913 & 0.297 & 0.297 & 0.843 & 0.301 \\
        \bottomrule
        \end{tabular}
    }
\end{table}

\begin{figure}[t]
    \centering
    \includegraphics[width=0.7\linewidth]{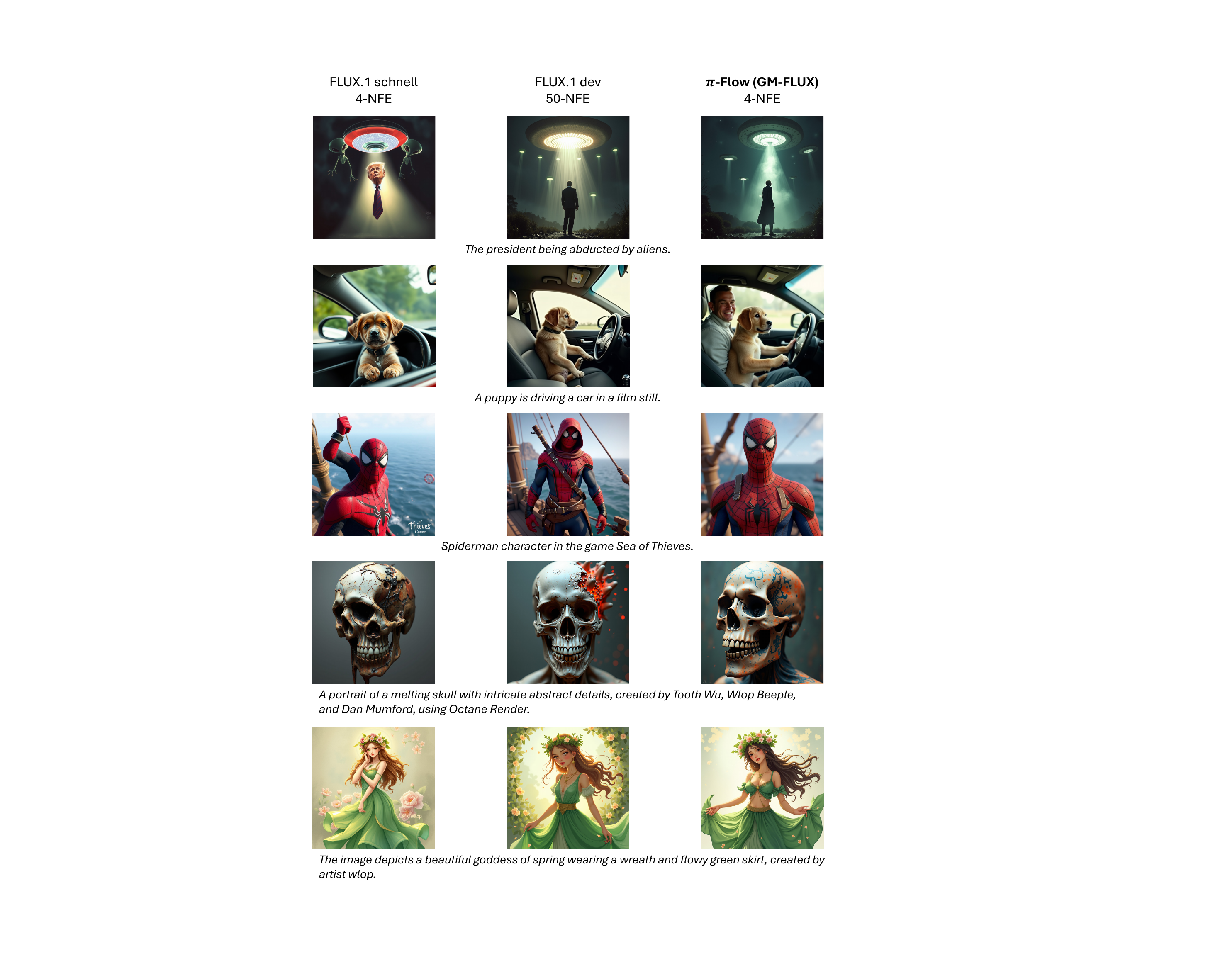}
    \caption{Typical failure cases of FLUX.1 schnell. For reference, we also show the corresponding FLUX.1 dev and \methodname{} results from the same initial noise. }
    \label{fig:schnell}
\end{figure}

\subsection{Discussion on GMFlow Policy Hyperparameters}
\label{sec:gmparam}
For the GMFlow policy, we observed that the hyperparameters suggested by \citet{gmflow} ($K=8$, $C=$ VAE latent channel size) generally work well. These parameters play important roles in balancing compatibility, expresiveness, and robustness. A larger $K$ improves expresiveness but impairs compatibility as it may complicate network training. A larger $C$ improves robustness (since GMFlow models correlations within each $C$-dimensional chunk) but impairs expresiveness (raises the theoretical $K=N\cdot C$ bound). In addition, improving expresiveness may generally compromise robustness, due to the increased chance of encountering outlier trajectories during inference. 

\edit{To further justify the design choice of pixel-wise factorization in GMFlow (where the latent grid is factorized by $L=H\times W$, $C=$ VAE latent channel size), we conduct toy model experiments to test alternative factorizations. In each experiment, we initialize a set of GMFlow parameters $A_{ik}, \vmu_{ik}, s$ with $K=32$ components, and directly optimize them to overfit the FLUX teacher's behavior over the entire time domain $t\in(0, 1]$, using simple $\pi$-ID (Algorithm~\ref{alg:train_simple}) on a fixed initial noise $\vx_1$ and a fixed text prompt. This setup isolates the inductive bias of the policy itself, without any influence from the student network. As shown in Fig.~\ref{fig:toy_model}, pixel-wise factorization achieves the best results within 4000 optimization iterations, producing neutral colors and rich textures, and is therefore the most suitable choice for the GMFlow policy.}

\begin{figure}[t]
    \centering
    \includegraphics[width=1.0\linewidth]{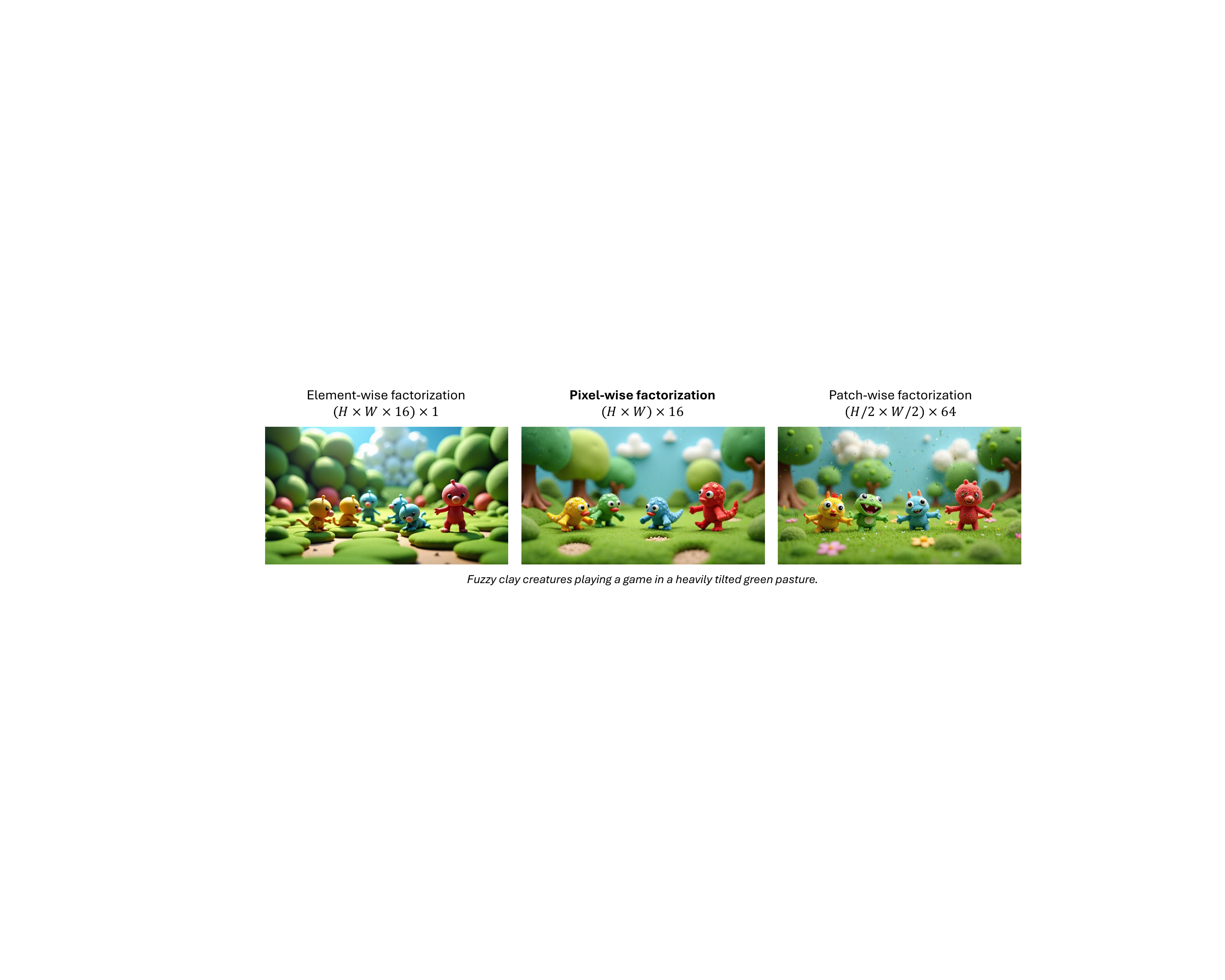}
    \caption{\edit{Comparison of different $L \times C$ factorization schemes for the GMFlow policy, evaluated on the toy models after 4000 optimization iterations. The default pixel-wise factorization ($C = \text{VAE latent channels}$) proposed by \citet{gmflow} produces neutral colors and detailed textures. In contrast, element-wise factorization ($C = 1$) leads to over-saturated colors, high contrast, and over-smoothed textures, while patch-wise factorization ($C = 2 \times 2 \times \text{VAE latent channels}$) results in ``confetti'' artifacts and noisy textures.}}
    \label{fig:toy_model}
\end{figure}

\section{Discussion on FLUX.1 schnell}
\label{sec:schnell}

The official 4-NFE FLUX.1 schnell model~\citep{flux1_schnell_space_2024} (based on adversarial distillation~\citep{ladd}) is distilled from the closed-source FLUX.1 pro instead of the publicly available FLUX.1 dev. This makes a direct comparison to the student models in Table~\ref{tab:fluxqwen} inequitable.

For reference, nevertheless, we include the COCO-10k and HPSv2 metrics for FLUX.1 schnell in Table~\ref{tab:schnell}. These metrics reveal a trade-off: while FLUX.1 schnell achieves significantly better data and prompt alignment than FLUX.1 dev, its preference alignment is substantially weaker than FLUX.1 dev and all of its students.

To validate this observation, we conducted a human preference study. Our 4-NFE \methodname{} (GM-FLUX) was compared against FLUX.1 schnell on 200 images generated from HPSv2 prompts. \methodname{} was preferred by users 59.5\% of the time, aligning with the HPSv2.1 preference metric. Furthermore, qualitative comparisons in Fig.~\ref{fig:schnell} reveals that FLUX.1 schnell is prone to frequent structural errors (e.g., missing/extra/distorted limbs), whereas \methodname{} maintains coherent structures.

\section{Proof of Theorem~\ref{the:gmode}}
\label{sec:proof}

We will prove that a GM with $N\cdot C$ components suffices for approximating any $N$-step trajectory in $\R^C$ by first establishing Theorem~\ref{the:discrete_ode}, and then applying the Richter--Tchakaloff theorem to show that a mixture of $N\cdot C$ Dirac deltas satisfy all ODE moment equations, which finally leads to $N\cdot C$ Gaussian components.
\begin{theorem}
\label{the:discrete_ode}
Given pairwise distinct times $t_1,\ldots,t_N\in(0,1]$ and vectors 
$\vx_{t_n},\,\dot{\vx}_{t_n}\in\R^C$ for $n=1,\ldots,N$, there exists a probability measure $p(\diff \vx_0)$ on $\R^C$, such that Eq~(\ref{eq:pf_ode}) holds at $t=t_n$ for every $n=1,\ldots,N$.
\end{theorem}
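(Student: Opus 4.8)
The plan is to recast the claim as a convex-geometry fact about a finite point set and prove it with a separating-hyperplane argument plus a Gaussian-tail estimate. First I would unwind what Eq.~(\ref{eq:pf_ode}) asks of a candidate prior $p$: at $t=t_n$ it says $\E_{\vx_0\sim p(\vx_0\mid\vx_{t_n})}[\vx_0]=\vm_n$ with $\vm_n\coloneqq\vx_{t_n}-t_n\dot{\vx}_{t_n}$ (determined by the data), and writing out the posterior and clearing the strictly positive normalizer this becomes the linear moment condition
\begin{equation*}
\int_{\R^C}(\vx_0-\vm_n)\,w_n(\vx_0)\,p(\diff\vx_0)=\vzero,\qquad w_n(\vx_0)\coloneqq\mathcal{N}\bigl(\vx_{t_n};(1-t_n)\vx_0,t_n^2\mI\bigr)>0 .
\end{equation*}
Since $w_n>0$ everywhere, any probability measure $p$ gives $p(\vx_{t_n})=\int w_n\,\diff p>0$, so the posterior (Bayes rule) and Eq.~(\ref{eq:pf_ode}) make sense even for discrete $p$. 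Stacking the $N$ vector conditions, I want a probability measure with $\int\Psi\,\diff p=\vzero$, where $\Psi\colon\R^C\to\R^{NC}$, $\Psi(\vx_0)\coloneqq\bigl((\vx_0-\vm_n)\,w_n(\vx_0)\bigr)_{n=1}^N$. Thus it suffices to show $\vzero\in\operatorname{conv}\bigl(\Psi(\R^C)\bigr)$: by Carathéodory this yields points $\vx_0^{(1)},\dots,\vx_0^{(m)}$ and weights $\lambda_i>0$ summing to $1$ with $\sum_i\lambda_i\Psi(\vx_0^{(i)})=\vzero$, and then $p\coloneqq\sum_i\lambda_i\delta_{\vx_0^{(i)}}$ works (and is finitely supported, which is convenient for the later reduction to a Gaussian mixture).

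To prove $\vzero\in\operatorname{conv}(\Psi(\R^C))$ I would argue by contradiction. If $\vzero$ lies outside this nonempty convex set, the separating hyperplane theorem gives $\vy=(\vy_1,\dots,\vy_N)\neq\vzero$ with $h_\vy(\vx_0)\coloneqq\langle\vy,\Psi(\vx_0)\rangle=\sum_{n=1}^N\langle\vy_n,\vx_0-\vm_n\rangle\,w_n(\vx_0)\ge0$ for all $\vx_0\in\R^C$. I then rule this out by sending $\vx_0$ to infinity in a carefully chosen direction. For $t_n<1$, completing the square gives $w_n(\vx_0)=C_n\exp(-\beta_n\|\vx_0-\vc_n\|^2)$ with $C_n>0$ and $\beta_n=\tfrac12\bigl((1-t_n)/t_n\bigr)^2$, while for $t_n=1$, $w_n$ is the positive constant $C_n$ (set $\beta_n\coloneqq0$); since $t\mapsto\tfrac12((1-t)/t)^2$ is strictly decreasing on $(0,1]$, the distinct $t_n$ give distinct $\beta_n$. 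Let $n^\star$ realize $\min\{\beta_n:\vy_n\neq\vzero\}$. Every other $n$ with $\vy_n\neq\vzero$ has $\beta_n>\beta_{n^\star}$, hence $w_n(\vx_0)/w_{n^\star}(\vx_0)\to0$ as $\|\vx_0\|\to\infty$ (the exponent is $-(\beta_n-\beta_{n^\star})\|\vx_0\|^2+O(\|\vx_0\|)\to-\infty$, and a linear prefactor does not change that), so
\begin{equation*}
\frac{h_\vy(\vx_0)}{w_{n^\star}(\vx_0)}=\langle\vy_{n^\star},\vx_0-\vm_{n^\star}\rangle+o(1)\qquad\text{as }\|\vx_0\|\to\infty .
\end{equation*}
Evaluating along $\vx_0=-s\,\vy_{n^\star}$ with $s\to+\infty$ makes the leading term $-s\|\vy_{n^\star}\|^2-\langle\vy_{n^\star},\vm_{n^\star}\rangle\to-\infty$; since $w_{n^\star}>0$, this forces $h_\vy(\vx_0)<0$ for large $s$, contradicting $h_\vy\ge0$. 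Hence $\vzero\in\operatorname{conv}(\Psi(\R^C))$, and the previous paragraph delivers the measure $p$.

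The hard part is the second paragraph---ruling out a nonzero functional $\vy$ that keeps $h_\vy$ nonnegative. The structural insight doing the work is that among the Gaussian weights $w_n$ the one with the smallest precision $\beta_{n^\star}$ (largest variance, the $t_n=1$ case included) dominates all the others at infinity, so its linear prefactor $\langle\vy_{n^\star},\vx_0-\vm_{n^\star}\rangle$ cannot be masked and must drag $h_\vy$ below zero along the ray $-\vy_{n^\star}$. Two points need a little care: (i) when some $t_n=1$ the map $\Psi$ is unbounded, so one should not lean on compactness, but the ``slowest-decaying term wins'' argument above is phrased to cover this verbatim; (ii) the separating hyperplane statement must be the version valid when $\operatorname{conv}(\Psi(\R^C))$ is lower-dimensional, which still produces a nonzero $\vy$ with $h_\vy\ge0$ since $\vzero$ lies outside the relative interior. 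Everything else---the Bayes-rule rewriting, the Carathéodory step, and the tail estimate---is routine.
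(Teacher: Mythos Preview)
Your proof is correct and takes essentially the same route as the paper: recast Eq.~(\ref{eq:pf_ode}) as linear moment constraints, argue by separating hyperplane that $\vzero$ lies in the convex hull of the stacked kernel map by exploiting that the Gaussian weight with the slowest decay rate dominates at infinity so that its linear prefactor must change sign, then invoke Carath\'eodory. The only cosmetic difference is that the paper first factorizes $p$ coordinatewise and runs the argument in one dimension (its Lemma~\ref{lem:convex}, with $\vgamma\colon\R\to\R^N$) before reassembling, whereas you work directly with $\Psi\colon\R^C\to\R^{NC}$; this is immaterial for Theorem~\ref{the:discrete_ode} itself, though the paper's univariate detour is what later feeds the sharper atom count in Theorem~\ref{the:gmode}.
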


\subsection{Moment Equation}
For every $t\in(0, 1]$, the ODE moment equation has the following equivalent forms:
\begin{gather}
\begin{aligned}
    \dot{\vx}_t &= \int_{\R^C} \frac{\vx_t - \vx_0}{t} p(\diff \vx_0 | \vx_t) \notag
\end{aligned} \\
\begin{aligned}
    \equivto \dot{\vx}_t \int_{\R^C} p(\diff \vx_0 | \vx_t) = \int_{\R^C} \frac{\vx_t - \vx_0}{t} p(\diff \vx_0 | \vx_t) \notag
\end{aligned} \\
\begin{aligned}
    \equivto \int_{\R^C} \frac{\vx_0 - \vx_t + t\dot{\vx}_t}{t} p(\diff \vx_0 | \vx_t) = \bm{0} \notag
\end{aligned} \\
\begin{aligned}
    \equivto \int_{\R^C} \frac{(\vx_0 - \vx_t + t\dot{\vx}_t) \mathcal{N}\left(\vx_t; \alpha_t \vx_0, \sigma_t^2 \mI\right) p(\diff \vx_0)}{t p(\vx_t)} = \bm{0} \notag
\end{aligned} \\
\begin{aligned}
    \equivto \int_{\R^C} (\vx_0 - \vx_t + t\dot{\vx}_t) \mathcal{N}\left(\vx_t; \alpha_t \vx_0, \sigma_t^2 \mI\right) p(\diff \vx_0) = \bm{0}.
    \label{eq:fredholm_multi}
\end{aligned}
\end{gather}
Let $\vg(t,\vx_0) \coloneqq (\vx_0 - \vx_t + t\dot{\vx}_t) \mathcal{N}\left(\vx_t; \alpha_t \vx_0, \sigma_t^2 \mI\right)$ be a kernel function. The above equation can be written as a multivariate homogeneous Fredholm integral equation of the first kind:
\begin{align}
    \int_{\R^C} \vg(t,\vx_0) p(\diff \vx_0) = \bm{0}.
    \label{eq:fredholm}
\end{align}

To prove Theorem~\ref{the:discrete_ode}, we need to show that there exists a probability measure $p(\diff \vx_0)$ on $\R^C$ that solves the Fredholm equation at $t=t_n$ for every $n=1,\cdots,N$.

\subsection{Univariate Moment Equation}
To prove the existence of a solution to the multivariate Fredholm equation, we can simplify the proof into a univariate case by showing that an element-wise probability factorization $p(\diff \vx_0) = \prod_{i=1}^C p(\diff {x_i}_0)$ exists that solves the Fredholm equation. In this case, Eq.~(\ref{eq:fredholm_multi}) can be written as:
\begin{gather}
\begin{aligned}
   & \forall i = 1, 2, \cdots, C, \notag \\
   & \int_{\R} ({x_i}_0 - {x_i}_t + t{{}\dot{x}_i}_t) \mathcal{N}\left({x_i}_t; \alpha_t {x_i}_0, \sigma_t^2 \right) p(\diff {x_i}_0)  \prod_{j\neq i}\int_{\R}\mathcal{N}\left({x_j}_t; \alpha_t {x_j}_0, \sigma_t^2 \right) p(\diff {x_j}_0) = 0 \notag 
\end{aligned} \\
\begin{aligned}
   \equivto \forall i = 1, 2, \cdots, C,\quad \int_{\R} ({x_i}_0 - {x_i}_t + t{{}\dot{x}_i}_t) \mathcal{N}\left({x_i}_t; \alpha_t {x_i}_0, \sigma_t^2 \right) p(\diff {x_i}_0) = 0. 
\end{aligned}
\end{gather}
To see this, we need to prove that there exists a probability measure $p(x_0)$ on $\R$ that solves the following univariate Fredholm equation at $t = t_n$ for every $n=1,\cdots,N$:
\begin{align}
    \int_{\R} g(t,x_0) p(\diff x_0) = 0,
\end{align}
where $g(t,x_0) \coloneqq (x_0 - x_t + t \dot{x}_t) \mathcal{N}\left(x_t; \alpha_t x_0, \sigma_t^2\right)$ is the univariate kernel function. 

\subsection{Convex Combination}
% We will first prove that the zero vector lies in the convex hull $\mathrm{conv} \{\gamma(x_0)\colon x_0 \in \R \}$. Then, the zero vector can be written as a convex combination by Carathéodory's theorem, which corresponds to an atomic probability measure.
\begin{lemma}
Define the vector function:
\begin{equation}
    \vgamma \colon \R \to \R^{N}, \quad \vgamma(x_0) = \left( g(t_1, x_0), g(t_2, x_0),\cdots, g(t_N, x_0) \right).
\end{equation}
Then, the zero vector lies in the convex hull in $\R^N$, i.e.:
\begin{equation}
    \bm{0} \in \mathrm{conv} \Set{\vgamma(x_0) | x_0 \in \R} \in \R^N.
\end{equation}
\label{lem:convex}
\end{lemma}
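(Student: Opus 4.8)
The plan is to argue by contradiction, combining the separating‑hyperplane theorem with an asymptotic analysis of the kernel $g$ as $x_0\to\pm\infty$. Write $\alpha_t=1-t$, $\sigma_t=t$, and for each $n$ put $r_n\coloneqq x_{t_n}-t_n\dot{x}_{t_n}$, so that the univariate kernel factorizes as
\begin{equation}
  g(t_n,x_0)=(x_0-r_n)\,\phi_n(x_0),\qquad \phi_n(x_0)\coloneqq\mathcal{N}\!\left(x_{t_n};\alpha_{t_n}x_0,\sigma_{t_n}^2\right)>0.
\end{equation}
As a function of $x_0$, $\phi_n$ is a positive Gaussian bump when $t_n<1$ and a positive constant when $t_n=1$; quantitatively $\phi_n(x_0)\propto\exp\!\left(-\beta_n x_0^2+(\text{lower order})\right)$ with $\beta_n\coloneqq\alpha_{t_n}^2/(2\sigma_{t_n}^2)=(1-t_n)^2/(2t_n^2)$, which is \emph{strictly decreasing} in $t_n$ on $(0,1]$ and vanishes exactly at $t_n=1$. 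I will use two consequences: (a) the $n$-th coordinate $g(t_n,\cdot)$ changes sign only at $x_0=r_n$, and for large $|x_0|$ it has the sign of $x_0-r_n$; and (b) among $\phi_1,\dots,\phi_N$, the one with the \emph{largest} $t_n$ decays the slowest (or, when that $t_n=1$, is overtaken by its linear prefactor).

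Suppose, for contradiction, that $\bm{0}\notin\mathrm{conv}\,S$, where $S\coloneqq\{\vgamma(x_0):x_0\in\R\}$. Since $\mathrm{conv}\,S$ is a convex set not containing $\bm{0}$, the separating/supporting hyperplane theorem furnishes a nonzero $\vc=(c_1,\dots,c_N)\in\R^N$ with $\langle\vc,s\rangle\ge 0$ for all $s\in S$; equivalently, the scalar function $h(x_0)\coloneqq\langle\vc,\vgamma(x_0)\rangle=\sum_{n=1}^{N}c_n\,g(t_n,x_0)$ satisfies $h(x_0)\ge 0$ for every $x_0\in\R$. (The only care needed is that $\mathrm{conv}\,S$ may fail to be closed; this is harmless, since $\bm{0}$ is either strictly separable from $\overline{\mathrm{conv}\,S}$ or a boundary point of it admitting a supporting hyperplane.)

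The heart of the argument is to show this forces $h$ to take \emph{both} signs. Let $n^\star$ be the index with the largest $t_n$ among all $n$ with $c_n\neq 0$; this is well defined because the $t_n$ are pairwise distinct and $\vc\neq\bm{0}$. For every $n\neq n^\star$ with $c_n\neq 0$ we have $t_n<t_{n^\star}$, hence $\beta_n>\beta_{n^\star}$, and a direct estimate yields $g(t_n,x_0)/g(t_{n^\star},x_0)\to 0$ as $|x_0|\to\infty$: the ratio of affine prefactors tends to $1$ while the ratio of Gaussian factors decays like $\exp\!\left(-(\beta_n-\beta_{n^\star})x_0^2\right)$ (and trivially so when $t_{n^\star}=1$). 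Terms with $c_n=0$ drop out identically. Therefore $h(x_0)=g(t_{n^\star},x_0)\,(c_{n^\star}+o(1))$ as $|x_0|\to\infty$. Since $\phi_{n^\star}>0$ and $c_{n^\star}\neq 0$, for large $|x_0|$ the sign of $h(x_0)$ equals $\sign(c_{n^\star})\cdot\sign(x_0-r_{n^\star})$, which is $\sign(c_{n^\star})$ as $x_0\to+\infty$ and $-\sign(c_{n^\star})$ as $x_0\to-\infty$. Hence $h$ is strictly negative on a half‑line, contradicting $h\ge 0$ on $\R$; therefore $\bm{0}\in\mathrm{conv}\,S$.

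\textbf{Main obstacle.} The delicate step is the asymptotic domination in the third paragraph: one must verify that the affine prefactors $x_0-r_n$ do not disturb the ordering by $\beta_n$, handle the boundary case $t_{n^\star}=1$ (where the ``slowest'' term is linear rather than Gaussian) within the same estimate, and confirm that isolating the single dominant term needs no smallness of the remaining coefficients. The factorization $g(t_n,x_0)=(x_0-r_n)\phi_n(x_0)$ together with the monotonicity of $\beta_n$ in $t_n$ settles all three points; the hyperplane input is entirely standard. (Lemma~\ref{lem:convex} then gives, via Carath\'eodory, a finitely supported probability measure solving the $N$ univariate moment equations, which is combined over the $C$ coordinates and upgraded from Diracs to Gaussians through Richter--Tchakaloff, as outlined before the lemma.)
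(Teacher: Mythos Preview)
Your proof is correct and follows essentially the same route as the paper's: assume $\bm{0}\notin\mathrm{conv}\,S$, obtain a separating direction, then show the dominant term (selected by the slowest Gaussian decay rate $\alpha_{t_n}^2/(2\sigma_{t_n}^2)$, equivalently the largest $t_n$) forces the linear combination $h$ to change sign at infinity due to the affine prefactor $x_0-r_n$. Your write-up is slightly more careful than the paper's in two respects---you explicitly handle the boundary case $t_{n^\star}=1$ (where $\phi_{n^\star}$ is constant) and you address the possible non-closedness of $\mathrm{conv}\,S$---but these are refinements of the same argument rather than a different approach.
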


\begin{proof}
Define $S \coloneqq \mathrm{conv} \Set{\vgamma(x_0) | x_0 \in \R }$.
Assume for the sake of contradiction that $\bm{0} \notin S$.

By the supporting and separating hyperplane theorem, there exists $\vw \neq \bm{0} \in \R^N$, such that:
\begin{align}
    \forall \vchi \in S,\quad \langle \vw, \vchi \rangle \leq 0.
\end{align}
In particular, this implies that:
\begin{align}
    \forall x_0 \in \R,\quad \langle \vw, \vgamma(x_0) \rangle \leq 0.
\end{align}

Define $h(x_0) \coloneqq \langle \vw, \vgamma(x_0) \rangle = \sum_{n=1}^N w_n g(t_n,x_0)$. Recall the definition of $g(t,x_0)$ that:
\begin{align}
    g(t,x_0) &= (x_0 - x_t + t \dot{x}_t) \mathcal{N}\left(x_t; \alpha_t x_0, \sigma_t^2\right) \notag \\
    &= \frac{x_0 - x_t + t \dot{x}_t}{\sqrt{2\pi t^2}} \exp\left(-\frac{(x_t - \alpha_t x_0)^2}{2\sigma_t^2}\right).
\end{align}
Let $n^\ast$ be an index with $w_{n^\ast} \neq 0$ for which the exponential term above decays the slowest, i.e.:
\begin{align}
    \frac{\alpha_{t_{n^\ast}}^2}{2\sigma_{t_{n^\ast}}^2} = \min \Set{ \frac{\alpha_{t_n}^2}{2\sigma_{t_n}^2} | w_{n^\ast} \neq 0 }.
\end{align}
Note that since $\frac{\alpha_t^2}{2\sigma_t^2}$ is monotonic, for every $n \neq n^\ast$ with $w_n \neq 0$, we have $\frac{\alpha_{t_n}^2}{2\sigma_{t_n}^2} > \frac{\alpha_{t_{n^\ast}}^2}{2\sigma_{t_{n^\ast}}^2}$. Therefore, as $|x_0|\to\infty$, $h(x_0)$ is dominated by the $n^\ast$-th component, i.e.:
\begin{align}
    h(x_0) = w_{n^\ast} \frac{
        x_0 - x_{t_{n^\ast}} + t_{n^\ast} \dot{x}_{t_{n^\ast}}
    }{
        \sqrt{2 \pi t_{n^\ast}^2}
    } \exp\left(-\frac{(x_{t_{n^\ast}} - \alpha_{t_{n^\ast}} x_0)^2}{2\sigma_{t_{n^\ast}}^2}\right)(1+O(1)).
\end{align}
Because the term $x_0 - x_{t_{n^\ast}} + t_{n^\ast} \dot{x}_{t_{n^\ast}}$ changes sign between $-\infty$ and $+\infty$, $h(x_0)$ takes both positive and negative values. This contradicts the hyperplane implication that $h(x_0) \leq 0$. Therefore, we conclude that $\bm{0} \in S$.
\end{proof}

By Lemma~\ref{lem:convex} and Carathéodory's theorem, the zero vector can be expressed as a convex combination of at most $N+1$ points on $\vgamma(x_0)$. Therefore, there exists a finite-support probability measure $p(\diff x_0)$ consisting of $N+1$ Dirac delta components that solves the univariate Fredholm equation at $t=t_n$ for every $n=1,\cdots,N$, completing the proof of Theorem~\ref{the:discrete_ode}.

\subsection{\texorpdfstring{$N\cdot C$}{NC} Components Suffice}
Richter's extension to Tchakaloff's theorem states as follows.
\begin{theorem}[\citet{Richter1957,tchakaloff1957formules}]
Let $\mathcal{V}$ be a finite-dimensional space of measurable functions on $\R^C$. For some probability measure $p(\diff \vx_0)$ on $\R^C$, define the moment functional:
\begin{align}
    \Lambda\colon \mathcal{V}\to \R, \quad \Lambda[g]\coloneqq \int_{\R^C} g(\vx_0) p(\diff \vx_0).
\end{align}
\label{the:richter}
\end{theorem}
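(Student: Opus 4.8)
Although the excerpt stops just before the conclusion, the statement to establish is the \emph{Richter--Tchakaloff} theorem: there exist $k\le\dim\mathcal{V}$ points $\vx_0^{(1)},\ldots,\vx_0^{(k)}\in\R^C$ and weights $\lambda_1,\ldots,\lambda_k>0$ with $\Lambda[g]=\sum_{j=1}^{k}\lambda_j\, g(\vx_0^{(j)})$ for every $g\in\mathcal{V}$, and moreover $\sum_{j=1}^{k}\lambda_j=\Lambda[1]=1$ whenever $1\in\mathcal{V}$ (so the atomic measure $\sum_j\lambda_j\delta_{\vx_0^{(j)}}$ is again a probability measure, or can be renormalized to one when the relevant moment equation is homogeneous). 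The plan is the convex-geometry argument of Bayer--Teichmann. Set $m=\dim\mathcal{V}$, fix a basis $g_1,\ldots,g_m$, and define the measurable, $p$-integrable map $F\colon\R^C\to\R^m$ by $F(\vx_0)=(g_1(\vx_0),\ldots,g_m(\vx_0))$, with moment vector $\vc\coloneqq\int_{\R^C}F(\vx_0)\,p(\diff\vx_0)=(\Lambda[g_1],\ldots,\Lambda[g_m])$. Since matching $\Lambda$ on the basis is the same as matching it on all of $\mathcal{V}$ by linearity, it suffices to exhibit $\vc$ as a nonnegative combination of at most $m$ vectors drawn from the image $F(\R^C)$; the attaining points and coefficients are then the desired atoms and weights.

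The crux is the lemma that $\vc$ lies in the \emph{conical hull} $\mathcal{K}\coloneqq\mathrm{cone}\,F(\R^C)=\{\sum_i t_i F(\vx_0^{(i)}):t_i\ge 0\}$, and not merely in its closure. That $\vc\in\overline{\mathcal{K}}$ is immediate, because $\overline{\mathcal{K}}$ is the intersection of the closed half-spaces $\{\vy:\langle\vw,\vy\rangle\le 0\}$ that contain $F(\R^C)$, and each such half-space also contains the barycenter $\vc$ (integrate $\langle\vw,F(\cdot)\rangle\le 0$ against $p$). To upgrade this to membership in $\mathcal{K}$ itself, I would induct on $m$. If $\vc=\bm{0}$ this is trivial; otherwise, if $\vc\notin\mathcal{K}$, then $\vc$ is a boundary point of the closed convex cone $\overline{\mathcal{K}}$, so a supporting hyperplane at $\vc$ (which passes through the apex $\bm{0}$ since $\overline{\mathcal{K}}$ is a cone) supplies $\vw\neq\bm{0}$ with $\langle\vw,F(\vx_0)\rangle\le 0$ for all $\vx_0$ and $\langle\vw,\vc\rangle=0$. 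Then $\int\langle\vw,F(\vx_0)\rangle\,p(\diff\vx_0)=0$ with a nonpositive integrand forces $\langle\vw,F(\vx_0)\rangle=0$ for $p$-almost every $\vx_0$, so $p$ is concentrated on the Borel set $H=\{\vx_0:\langle\vw,F(\vx_0)\rangle=0\}$, and both $F(H)$ and $\vc$ lie in the $(m-1)$-dimensional subspace $\vw^{\perp}$. Restricting $p$ and $F$ to $H$ and identifying $\vw^{\perp}\cong\R^{m-1}$, the inductive hypothesis gives $\vc\in\mathrm{cone}\,F(H)\subseteq\mathcal{K}$; the base cases $m=0,1$ are immediate.

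Granting the lemma, I would conclude with the conic form of Carathéodory's theorem: a point in the conical hull of a subset of $\R^m$ is a nonnegative combination of at most $m$ of its elements. Concretely, take any representation $\vc=\sum_{j=1}^{k}\lambda_j F(\vx_0^{(j)})$ with $\lambda_j>0$; if $k>m$ the vectors $F(\vx_0^{(1)}),\ldots,F(\vx_0^{(k)})$ are linearly dependent, so $\sum_j\mu_j F(\vx_0^{(j)})=\bm{0}$ with some $\mu_j>0$, and raising $s$ from $0$ until the first $\lambda_j-s\mu_j$ vanishes (coefficients with $\mu_j\le 0$ only increase) eliminates one atom while keeping all weights nonnegative; iterating yields $k\le m$. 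The resulting $\hat{p}=\sum_{j}\lambda_j\delta_{\vx_0^{(j)}}$ satisfies $\Lambda[g_i]=\sum_j\lambda_j g_i(\vx_0^{(j)})$ for each $i$, hence $\Lambda[g]=\sum_j\lambda_j g(\vx_0^{(j)})$ for all $g\in\mathcal{V}$ by linearity; and if $1\in\mathcal{V}$ then $\sum_j\lambda_j=\Lambda[1]=1$.

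The main obstacle is the lemma $\vc\in\mathcal{K}$: the conical hull of $F(\R^C)$ need not be closed, so one cannot simply invoke ``the barycenter lies in the closed conical hull'' and apply Carathéodory there; the dimension-reducing induction, powered by the separating hyperplane together with the almost-everywhere concentration of $p$ on a proper coordinate subspace, is what does the real work. The only other care needed is routine measurability bookkeeping---that $H$ is Borel and $F|_H$ remains measurable, so the restricted instance has the same form as the original---and checking the base cases of the induction.
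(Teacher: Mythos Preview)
The paper does not prove this theorem; it is quoted as a classical result of Richter and Tchakaloff and merely \emph{applied} (immediately after the statement) to conclude that the moment equations from Theorem~\ref{the:discrete_ode} are already matched by an $N\cdot C$-atomic measure. So there is no ``paper's own proof'' to compare against.

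That said, your argument is the standard modern convex-geometry proof (in the spirit of Bayer--Teichmann), and it is correct. The only step you pass over a bit quickly is the claim ``if $\vc\notin\mathcal{K}$ then $\vc$ is a boundary point of $\overline{\mathcal{K}}$'': what you actually need is that any point of $\overline{\mathcal{K}}$ in its \emph{relative interior} already lies in $\mathcal{K}$, which follows from the standard convex-analysis fact $\operatorname{relint}(\overline{\mathcal{K}})=\operatorname{relint}(\mathcal{K})\subseteq\mathcal{K}$. With that, the supporting-hyperplane/induction step and the conic Carath\'eodory reduction go through exactly as you wrote. Your remark that $\sum_j\lambda_j=1$ requires $1\in\mathcal{V}$ is also apt; in the paper's application the matched moments are all zero (the Fredholm equation is homogeneous), so the atomic measure can be renormalized to a probability measure without disturbing the conclusion.
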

Then there exists a $K$-atomic measure $p^\ast(\diff \vx_0) = \sum_{k=1}^K A_k \delta_{\vmu_k}(\diff \vx_0)$ with $A_k > 0$ and $K \leq \dim \mathcal{V}$ such that:
\begin{align}
    \forall g\in \mathcal{V}, \quad \Lambda[g] = \int_{\R^C} g(\vx_0)p^\ast(\diff \vx_0) = \sum_{k=1}^K A_k g(\vmu_k). 
\end{align}

By Theorem~\ref{the:discrete_ode}, we know that for $\mathcal{V}=\mathrm{span}\Set{g_i(t_n, \vx_0)|i=1,\cdots,C,\ n=1,\cdots,N}$ with the scalar function $g_i(t_n, \vx_0) \coloneqq ({x_i}_0 - {x_i}_t + t{{}\dot{x}_i}_t) \mathcal{N}\left(\vx_t; \alpha_t \vx_0, \sigma_t^2 \mI\right)$, there exists a probability measure $p(\diff \vx_0)$ such that $\int_{\R^D} g_i(t_n, \vx_0) p(\diff \vx_0) = 0$ for every $i,n$. Then, by the Richter--Tchakaloff theorem, there also exists a $K$-atomic measure with $K \leq \dim \mathcal{V} \leq N\cdot C$ that satisfies all the moment equations. By taking the upper bound, this implies the existence of an $N\cdot C$-atomic probability measure $p^\ast(\diff \vx_0) = \sum_{k=1}^{N\cdot C} A_k \delta_{\vmu_k}(\diff \vx_0)$ with $A_k > 0,\ \sum_{k=1}^{N\cdot C} A_k = 1$ that solves the Fredholm equation (Eq.~(\ref{eq:fredholm})) at $t=t_n$ for every $n=1,\cdots,N$.

Finally, since $\mathcal{N}\left(\vx_t; \alpha_t \vx_0, \sigma_t^2 \mI\right)$ is continuous, the $N\cdot C$ Dirac deltas in $p^\ast(\diff \vx_0)$ can be replaced by a mixture of $N\cdot C$ narrow Gaussians, such that $\dot{\vx}_n$ is approximated arbitrarily well for every $n$, i.e.:
\begin{align}
    & \forall n=1,\cdots,N, \notag \\
    & \lim_{s\to 0} \left. \frac{\vx_{t_n} - \int_{\R^D} \vx_0 p(\diff \vx_0 | \vx_{t_n})}{t_n} \right\rvert_{p(\diff \vx_0)=\sum_{k=1}^{N\cdot C} A_k \mathcal{N}(\diff \vx_0;\vmu_k,s^2I)} \notag \\
    =& \left. \frac{\vx_{t_n} - \int_{\R^D} \vx_0 p(\diff \vx_0 | \vx_{t_n})}{t_n} \right\rvert_{p(\diff \vx_0)=\sum_{k=1}^{N\cdot C} A_k \delta_{\vmu_k}(\diff \vx_0)} \notag \\
    =&\ \dot{\vx}_{t_n}
\end{align}
This completes the proof of Theorem~\ref{the:gmode}.

\section{Derivation of Closed-Form GMFlow Velocity}
\label{sec:gm_u}
In this section, we provide details regarding the derivation of closed-form GMFlow velocity, which was originally presented by \citet{gmflow} but not covered in detail.

Given the $\vu$-based GM prediction $q(\vu | \vx_{t_\text{src}}) = \sum_{k=1}^K A_k \mathcal{N}\left(\vu; \vmu_k, s^2\mI\right)$ with $A_k \in \R_+$, $\vmu_k \in \R^C$, $s \in \R_+ $, we first convert it into the ${\vx_0}$-based parameterization by substituting $\vu = \frac{\vx_{t_\text{src}} - \vx_0}{\sigma_{t_\text{src}}}$ into the density function, which yields: 
\begin{align}
    q(\vx_0 | \vx_{t_\text{src}}) = \sum_{k=1}^K A_k \mathcal{N}\left(\vx_0; {\vmu_\text{x}}_k, s_\text{x}^2 \mI \right),
\end{align}
with the new parameters ${\vmu_\text{x}}_k = \vx_{t_\text{src}} - \sigma_{t_\text{src}} \vmu_k$ and $s_\text{x} = \sigma_{t_\text{src}} s$. Then, for any $t < t_\text{src}$ and any $\vx_t \in \R^C$, the denoising posterior at $(\vx_t, t)$ is given by:
\begin{align}
    q(\vx_0|\vx_t) = \frac{p(\vx_t | \vx_0)}{Z \cdot p(\vx_{t_\text{src}}|\vx_0)} q(\vx_0 | \vx_{t_\text{src}}),
\end{align}
where $Z$ is a normalization factor dependent on $\vx_{t_\text{src}}, \vx_t, t_\text{src}, t$. Using the definition of forward diffusion $p(\vx_t|\vx_0) = \mathcal{N}\left(\vx_t;\alpha_t \vx_0, \sigma_t^2 \mI \right)$, we have:
\begin{gather}
\begin{aligned}
    q(\vx_0|\vx_t) 
    &= \frac{\mathcal{N}\left(\vx_t;\alpha_t \vx_0, \sigma_t^2 \mI \right)}{Z \cdot \mathcal{N}\left(\vx_{t_\text{src}};\alpha_{t_\text{src}} \vx_0, \sigma_{t_\text{src}}^2 \mI \right)} q(\vx_0 | \vx_{t_\text{src}}) \notag \\
    &= \frac{
        \mathcal{N}\left(\vx_0;\frac{1}{\alpha_t }\vx_t, \frac{\sigma_t^2}{\alpha_t^2} \mI\right)
    }{
        Z^\prime \cdot \mathcal{N}\left(\vx_0;\frac{1}{\alpha_{t_\text{src}} }\vx_{t_\text{src}}, \frac{\sigma_{t_\text{src}}^2}{\alpha_{t_\text{src}}^2} \mI\right)
    } q(\vx_0 | \vx_{t_\text{src}}) \notag \\
    &= \frac{1}{Z^{\prime\prime}} 
    \mathcal{N} \left(
        \vx_0 ; 
        \frac{\vnu}{\zeta}, 
        \frac{\mI}{\zeta} 
    \right) q(\vx_0 | \vx_{t_\text{src}}),
\end{aligned}
\\
\begin{aligned}
    \text{where} \quad \vnu &= \frac{ \alpha_t\vx_t }{\sigma_t^2} - \frac{ \alpha_{t_\text{src}} \vx_{t_\text{src}} }{\sigma_{t_\text{src}}^2}, \notag \\
    \zeta &= \frac{\alpha_t^2}{\sigma_t^2} - \frac{\alpha_{t_\text{src}}^2}{\sigma_{t_\text{src}}^2}. \notag
\end{aligned}
\end{gather}
The result can be further simplified into a new GM:
\begin{align}
    q(\vx_0|\vx_t) = \sum_{k=1}^K A_k^\prime \mathcal{N}\left( \vx_0 ; \vmu_k^\prime, {s^\prime}^2 \mI \right),
\label{eq:gm_posterior}
\end{align}
with the following parameters:
\begin{align}
    {s^\prime}^2 &= \frac{
        s_\text{x}^2
    }{
        s_\text{x}^2 \zeta
        + 1
    } \\
    \vmu_k^\prime &= \frac{
        s_\text{x}^2 \vnu
        + {\vmu_\text{x}}_k
    }{
        s_\text{x}^2 \zeta
        + 1
    } \\
    A_k^\prime &= \frac{\exp a_k^\prime}{\sum_{k=1}^K \exp a_k^\prime},
\end{align}
where the new  logit $a_k^\prime$ is given by:
\begin{align}
    % a_k^\prime = \log A_k - \frac{1}{2}\frac{
    %     \|\vnu - \zeta{\vmu_\text{x}}_k\|^2
    % }{
    %     \zeta \left(s_\text{x}^2 \zeta  + 1
    %     \right)
    % }.
    a_k^\prime = \log A_k + \frac{
        {\vmu_\text{x}}_k \cdot \left(\vnu - \frac{1}{2} \zeta {\vmu_\text{x}}_k \right)
    }{
        s_\text{x}^2 \zeta  + 1
    }.
\end{align}
Finally, the closed-form GMFlow velocity at $(\vx_t, t)$ is given by function $\pi$:
\begin{align}
    \pi\colon \R^C \times \R \to \R^C, \quad \pi (\vx_t, t) &= \frac{\vx_t - \E_{\vx_0 \sim q(\vx_0 | \vx_t)}[\vx_0]}{t} \notag \\
    &= \frac{\vx_t - \sum_{k=1}^K A_k^\prime \vmu_k^\prime}{t}.
\end{align}

\textbf{Extension to discrete support.}
The closed-form GMFlow velocity can also be generalized to discrete support by taking $\lim_{s_\text{x} \to 0} \pi (\vx_t, t)$, which yields the simplified parameters:
\begin{align}
    \vmu_k^\prime &= {\vmu_\text{x}}_k \\
    % a_k^\prime &= \log A_k - \frac{1}{2}\frac{
    %     \|\vnu - \zeta{\vmu_\text{x}}_k\|^2
    % }{
    %     \zeta
    % }.
    a_k^\prime &= \log A_k + {\vmu_\text{x}}_k \cdot \left(\vnu - \frac{1}{2} \zeta {\vmu_\text{x}}_k \right).
\end{align}

\section{Additional Qualitative Results.}
We show additional uncurated results of FLUX-based models in Fig.~\ref{fig:uncurated1} and \ref{fig:uncurated2}.

\begin{figure}
    \centering
    \includegraphics[width=1.0\linewidth]{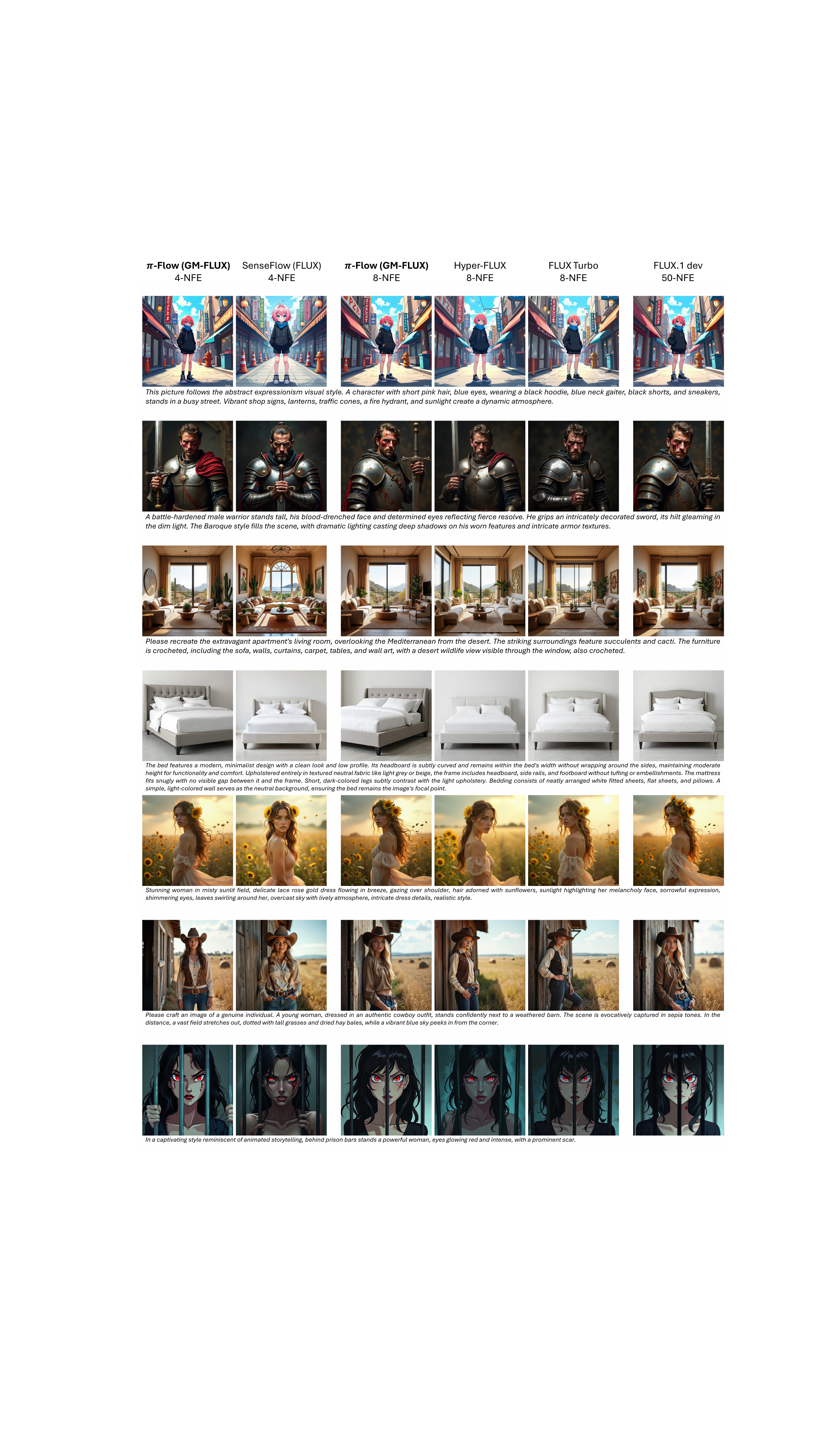}
    \caption{An uncurated random batch from the OneIG-Bench prompt set, part A.}
    \label{fig:uncurated1}
\end{figure}

\begin{figure}
    \centering
    \includegraphics[width=1.0\linewidth]{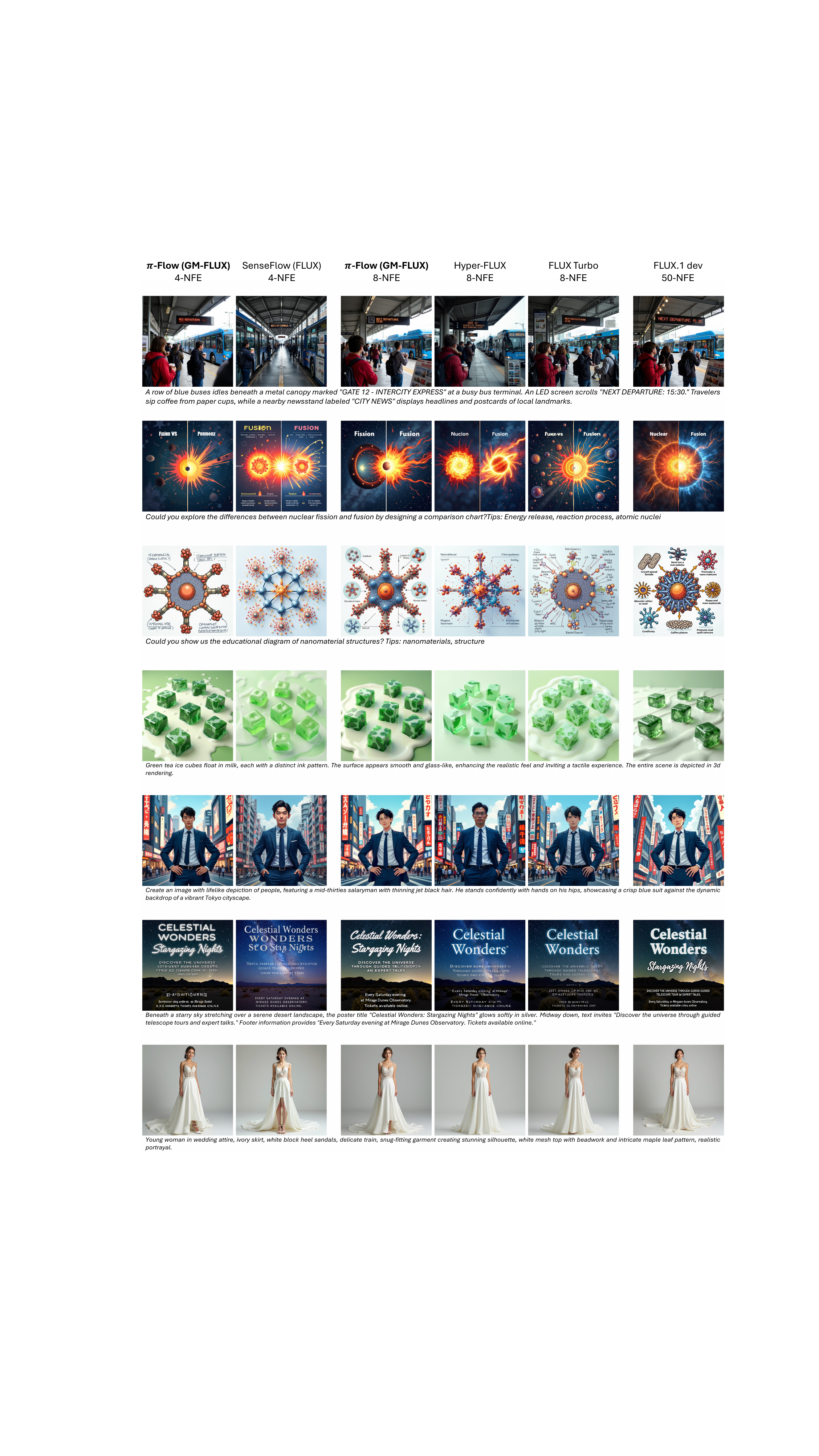}
    \caption{An uncurated random batch from the OneIG-Bench prompt set, part B.}
    \label{fig:uncurated2}
\end{figure}

\end{document}